\newcommand{\blind}{0}
\newtheorem{lemma}{Lemma}
\newtheorem{theorem}{Theorem}
\newtheorem{proposition}[theorem]{Proposition}
\newtheorem{assumption}{Assumption}
\algnewcommand{\parState}[1]{\State%
  \parbox[t]{\dimexpr\linewidth-\the\ALG@thistlm}{\strut #1\strut}}
\newcolumntype{H}{>{\setbox0=\hbox\bgroup}c<{\egroup}@{}}
\def\argmin{\mathop{\rm argmin}}
\def\argmax{\mathop{\rm argmax}}
\def\arginf{\mathop{\rm arginf}}
\DeclareMathOperator{\sign}{sign}
\date{}
\begin{document}

\def\spacingset#1{\renewcommand{\baselinestretch}%
{#1}\small\normalsize} \spacingset{1}

\if0\blind
{
  \title{\bf Learning Acceptance Regions for Many Classes with Anomaly Detection}
  \author{Zhou Wang and Xingye Qiao\thanks{Correspondence to: Xingye Qiao (e-mail: qiao@math.binghamton.edu). Zhou Wang is a PhD student and Xingye Qiao is a professor in the Department of Mathematics and Statistics at Binghamton University, State University of New York, Binghamton, New York, 13902-6000.}}
  \maketitle
} \fi

\if1\blind
{
  \bigskip
  \bigskip
  \bigskip
  \begin{center}
    {\LARGE\bf Learning Acceptance Regions for Many Classes with Anomaly Detection}
  \end{center}
  \medskip
} \fi

\bigskip
\begin{abstract}
\noindent Set-valued classification, a new classification paradigm that aims to identify all the plausible classes that an observation belongs to, can be obtained by learning the acceptance regions for all classes. Many existing set-valued classification methods do not consider the possibility that a new class that never appeared in the training data appears in the test data. Moreover, they are computationally expensive when the number of classes is large. We propose a Generalized Prediction Set (GPS) approach to estimate the acceptance regions while considering the possibility of a new class in the test data. The proposed classifier minimizes the expected size of the prediction set while guaranteeing that the class-specific accuracy is at least a pre-specified value. Unlike previous methods, the proposed method achieves a good balance between accuracy, efficiency, and anomaly detection rate. Moreover, our method can be applied in parallel to all the classes to alleviate the computational burden. Both theoretical analysis and numerical experiments are conducted to illustrate the effectiveness of the proposed method.
\end{abstract}

\noindent%
{\it Keywords:}  Set-valued classification; Anomaly Detection; Kernel Feature Selection; Statistical Learning Theory
\vfill

\newpage
\spacingset{1.4} 

\setcounter{page}{1}
\abovedisplayskip=8pt
\belowdisplayskip=8pt

\section{Introduction}
\label{introduction}
In multicategory classification, traditional methods return a single class label as the prediction without a confidence measure attached. For points near the classification boundary where the classes overlap, these methods may misclassify with high probability. As classification and machine learning in general have played a more and more significant role in high stake domains, these mistakes can incur severe consequences. To avoid making mistakes when they are likely to happen, set-valued classification methods have emerged
\citep{herbei2006classification, shafer2008tutorial, D_mbgen_2008, denis2017confidence, wang2018learning, zhang2018reject, sadinle2019least}. A set-valued classifier may return multiple class labels as the prediction for each observation. Specifically, those near the boundary between classes may receive multiple labels as the prediction.

\citet{herbei2006classification}, \citet{bartlett2008classification}, \citet{ramaswamy2015consistent} and \citet{zhang2018reject} proposed and developed Classification with a Reject Option (CRO) by training a classifier and a rejector at the same time. A rejector determines when to refuse to make a classification for ambiguous points (i.e. ambiguity rejection). In CRO, observations that are rejected have been predicted to a subset of all class labels. Conformal prediction \citep{vovk2005algorithmic, shafer2008tutorial, balasubramanian2014conformal} is another increasingly popular framework that outputs a prediction set with a pre-specified confidence guarantee. \citet{lei2014classification}, \citet{wang2018learning} and \citet{sadinle2019least} considered the set-valued classification from an optimization perspective. In particular, the goal is to minimize ambiguity (defined as the expected size of the prediction set) while controlling class-specific misclassification rates. \citet{denis2015consistency, denis2017confidence} worked with its dual problem, minimizing misclassification rates with the ambiguity controlled.

In many practical fields like intrusion detection, bank fraud prevention, and public health, new classes that did not exist in historical training data may appear as time goes by. The aforementioned set-valued classification methods would be forced to classify new-class observations to an existing class. It is therefore important to design classifiers that are capable of \textit{anomaly detection} (aka, outlier detection). In public health, decision-makers need to confidently  identify the strain for a prevailing virus in a community. The fewer candidates there are, the more effective preventive measures can be deployed. Moreover, the possibility of a new virus (e.g. a new COVID-19 virus variant) requires the detection of new strains. This example shows the necessity of a set-valued classifier with the capacity of anomaly detection.

Recent works on adapting set-valued classifiers to handle anomaly data often focus on using the conformal prediction framework: first, a score function is obtained; second, a cutoff is determined using conformal splits; third, new observations are classified by thresholding the score with the cutoff. \citet{hechtlinger2018cautious} utilized the covariates' density given class, $p(\bm x\mid y)$, as the score. However, the acceptance region for each class is learned with no regards to any other class; in the sense of minimizing the ambiguity, this approach was shown to be suboptimal \citep{D_mbgen_2008}.  \citet{guan2019prediction} considered the classification problem between a given class $k$ and the entire test data and thresholded the resulting estimates of $p(y\mid \bm x)$. Both methods depend on probability or density estimation, which is known to be a challenging task when the dimension $p$ is large \citep{wu2010robust, zhang2013effect}. It is hence desirable to propose a method without estimating probability. Moreover, the score functions in the aforementioned works were estimated without the goal of ultimately minimizing the ambiguity in mind. For example, though the true score $p(y\mid \bm x)$ can guarantee the minimization of the ambiguity, empirically a finite-sample estimate may not share this property. In this article, we propose  the Generalized Prediction Set (GPS) method to overcome these difficulties.

We have made four contributions in this article. First, we propose a new large-margin classification method for outlier detection without involving probability estimation. Our model is estimated by minimizing the empirical ambiguity and a penalty term that encourages outlier detection, subject to a bounded misclassification rate for each class.
Second, using weighted kernel and regularization, we enable feature selection for our method in high-dimensional settings.
Third, in contrast to methods that solve an optimization problem involving all the classes simultaneously, our proposed method is well positioned for parallel computing, hence, allowing fast classification even when there are many classes.
Finally, we conduct a thorough theoretical analysis of the proposed method, showing that its true misclassification rate is bounded, the excess ambiguity is bounded, and it has variable selection consistency.

The rest of the paper is organized as follows. Section \ref{sec:review} provides some background on anomaly detection and set-valued classification. We introduce our method and the implementation algorithms in Section \ref{sec:method}. Theoretical guarantees are provided in Section \ref{sec:theory}. In Section \ref{sec:numerical}, we compare the proposed method with competing methods using simulated and real data. Some concluding remarks are given in Section \ref{sec:conclusion} and proofs are in \cref{sec:proof}.


\section{Preliminaries}\label{sec:review}
In this section, we review the background of the anomaly detection problem and set-valued classification methods.
\subsection{Anomaly detection}\label{sec:anomalydet}

We define anomaly detection as the identification of new observations that do not belong to the same distributions as the existing observations. We use the terms anomaly detection, outlier detection, and novelty detection interchangeably. Commonly used anomaly detection methods include one-class SVM (OCSVM), deep one-class classification \citep{ruff2018deep}, density level set estimation \citep{breunig2000lof, chen2017density}, and positive-unlabeled learning (PU learning) \citep{du2014analysis}.

Suppose we have a random sample $\{\bm x_i\}_{i=1}^m$ from $\mathcal{X}$. Let $\Phi:\mathcal{X} \to \mathcal{H}$ be a kernel map from the input space to the feature space. OCSVM \citep{scholkopf2000support} aims to separate data features from the origin with a maximum margin. The OCSVM marks an observation $\bm x$ as an outlier if the decision function $f(\bm x):=\bm w^\top\Phi(\bm x)-\rho$ yields $f(\bm x)<0$, where $\bm w$ and $\rho$ are obtained by solving \begin{equation}
\min\limits_{\bm w, \rho}~\frac{\|\bm w\|^2}{2}+\frac{1}{m\nu}\sum\limits_{i=1}^m\xi_i-\rho,
~~~\mbox{s.t.}~ \bm w^\top\Phi(\bm x_i)\geq \rho-\xi_i, \ \xi_i\geq 0, \ i\in[m].
\end{equation}
Here the tuning parameter $\nu$ controls the number of observations treated as anomalies. To work with complex data, \citet{ruff2018deep, ruff2021unifying} achieved anomaly detection using deep learning, where new features are learned through a network and are then applied to the deep Support Vector Data Description (SVDD).

\citet{steinwart2005classification} showed that anomaly detection can be achieved by solving a classification problem between all the normal, existing classes combined and the abnormal class, assuming that the proportion of the abnormal class is known. Motivated by this equivalency, they proposed to train a cost-sensitive SVM between these two classes. To the same token, by having prior information about the anomaly class, \citet{liu2018open} proposed Open Category Detection (OCD) with theoretical guarantee to achieve a pre-specified outlier detection rate via estimating the corresponding distribution. However, in practice, abnormal class data are not observed. To mine information about the abnormal class, \citet{du2014analysis, du2015convex} proposed to classify between the entire training data with the entire test data; while the former contains normal classes only, the latter may contain abnormal classes. Various methods to estimate the proportion of the abnormal class have been proposed by \citet{elkan2008learning, blanchard2010semi} and \citet{christoffel2016class}.

Outlier detection can also be done in conjunction with a standard classification task. For example, both \citet{jumutc2013supervised} and \citet{hanczar2014combination} conducted two separated one-class SVMs in order to achieve binary classification and anomaly detection. \citet{hechtlinger2018cautious} proposed to use density level sets for the purpose of classification and anomaly detection. However, there is no theoretical guarantee for these methods. Other works related to classification and anomaly detection are open-set recognition (OSR) \citep{bendale2015towards} and out-of-distribution (OOD) detection \citep{yang2021oodsurvey}. Both of them are to reject observations with lower scores (potential anomalies) at first, and then conduct standard single-valued classification for non-rejected observations.

\subsection{Set-valued classification}\label{setvalue}
Consider a multicategory classification setting with input space $\mathcal{X}=\mathbb{R}^p$ and labels $\mathcal{Y}=\{1, \cdots, K\}$. Let $\left(\bm X, Y\right)\in \mathcal{X}\times \mathcal{Y}$ come from an unknown distribution $\mathcal{P}$. One way to obtain set-valued classifiers is to conduct a series of hypothesis tests that determine if the test observation belongs to a given class $k$. The set of all observations that are not rejected as being from class $k$ is called the acceptance region for class $k$, denoted as $\mathcal{C}_k\subset\mathcal{X}$. Given $\mathcal{C}_k$, $k\in[K],$ a set-valued classifier $\phi:\mathcal{X}\mapsto 2^\mathcal{Y}$ can be defined as $\phi(\bm x) =\{k: \bm x\in \mathcal{C}_k\}$, that is, all the classes whose acceptance regions contain $\bm x$. Typically, there are two competing metrics for a set-valued classifier, accuracy, and efficiency. The accuracy may be quantified by misclassification rate $\mathbb{P}\left(Y\not\in \phi\left(\bm X\right)\right)$ or class-specific misclassification rate $\mathbb{P}\left(Y\not\in \phi\left(\bm X\right)\mid Y=k\right)$, with the latter being the type I error rate for the hypothesis test. The efficiency is inversely measured by the ambiguity, defined as the cardinality of the prediction set $|\phi(\bm x)|=\sum_{k=1}^K\mathbbm{1}\{\bm x\in\mathcal{C}_k\}$. A set-valued classifier with $|\phi(\bm x)|\equiv K$ everywhere is always correct, but contains no useful information. In practice, one may want to balance the two metrics and obtain a classifier with both high accuracy and high efficiency, for example,
\begin{equation*}
\min_{\phi}~ \mathbb{E}\left[|\phi(\bm X)|\right], 
~~~\mbox{s.t.}~  \mathbb{P}\left(Y\not\in\phi(\bm X)\mid Y=k\right)\leq \gamma_k ~\mbox{for}~ k\in[K].
\end{equation*}
In this paper we drop the common restriction  $\mathcal{X}=\cup_{k=1}^K\mathcal{C}_k$ \citep{lei2014classification, sadinle2019least, wang2018learning}. This means that it is possible for $\phi(\bm x)$ to be empty for certain observations, that is, $\bm x$ is unlike any of the existing classes represented in the training data. Note that observations with $|\phi(\bm x)|=0$ and $|\phi(\bm x)|>1$ correspond to outlier observations and ambiguity-rejected observations, respectively. 

The Classification with Reject Options (CRO) literature typically consider the ambiguity-rejections \citep{ramaswamy2015consistent,zhang2018reject} only. \citet{herbei2006classification} and \citet{ramaswamy2015consistent} used 0-$d$-1 loss to quantify the loss for different prediction errors. For example, each misclassification costs 1 and each rejection costs a pre-specified $d\in[0, (K-1)/K]$. The Bayes optimal rule \citep{chow1970optimum} under the 0-$d$-1 loss predicts label $k$ to $\bm x$ if $k=\argmax_{k'}\mathbb{P}(Y=k'\mid \bm x)$ and $\mathbb{P}(Y=k\mid \bm x) > 1-d$, or rejects to predict $\bm x$ otherwise. The plug-in classifier is obtained by first estimating $\mathbb{P}(Y=k\mid \bm x)$ and then plugging them into the Bayes optimal rule. \citet{bartlett2008classification} used the bent hinge loss as a surrogate to the 0-$d$-1 loss and proved the Fisher consistency. \citet{zhang2018reject} introduced a refine option to consider prediction sets with the cardinality between 1 and $K$.

\section{Methodology}\label{sec:method}
We first formulate the proposed GPS method as an optimization problem, which is decoupled into several sub-problems. To solve each sub-problem, we use kernel learning to find a decision function based on the training data from each of the $K$ classes and the test data. Finally, we extend the method to kernel feature selection, to improve its performance for high-dimensional data.

\subsection{Overview of methodology}\label{sec:model}

Suppose our training sample and test sample are two i.i.d. samples from distribution $\mathcal{P}$ and distribution $\mathcal{Q}$, respectively. There are $K$ existing classes in both $\mathcal{P}$  and $\mathcal{Q}$; in addition, there are potentially new classes in $\mathcal{Q}$. Except for the new classes, the two distributions are only different in their prior probabilities for the $K$ classes.

\begin{assumption}\label{assum:1}
For each $k\in[K]$, the probability densities of $\bm X$ given class $k$, $p_{k}(\bm x)=p(\bm x\mid Y=k), $ are the same between distribution $\mathcal{P}$ and distribution $\mathcal{Q}$.
\end{assumption}

Let $\phi(\cdot)$ be a set-valued classifier. Our goal is to maximize both efficiency and accuracy of $\phi(\cdot)$ for future test data drawn from $\mathcal{Q}$. To this end, consider minimizing the ambiguity with class-specific misclassification rates bounded: \begin{equation}\label{eq:joint}
\min\limits_{\phi}~ \mathbb{E}_{\mathcal{Q}}\left[|\phi(\bm X)|\right], ~~~ \mbox{s.t.}~ \mathbb{P}_{\mathcal{Q}}\left(Y\not\in\phi(\bm X)\mid Y=k\right)\leq \gamma_k ~\mbox{for}~ k\in[K].
\end{equation}
Since $\mathbb{E}_{\mathcal{Q}}\left[|\phi(\bm X)|\right]$ does not depend on the class label $Y$, it may be assessed using the unlabelled test data from $\mathcal{Q}$. Moreover, since we assume that $p_{k}(\bm x)$ is the same between both distributions, we have that $\mathbb{P}_{\mathcal{Q}}\left(Y\not\in\phi(\bm X)\mid Y=k\right) = \mathbb{P}_{\mathcal{P}}\left(Y\not\in\phi(\bm X)\mid Y=k\right)$. This allows us to make use of the labeled training data from $\mathcal{P}$ to assess the misclassification rate in the constraint.

Recall that the set-valued classifier $\phi(\cdot)$ is defined using all the $K$ acceptance regions:  $\phi(\bm x) =\{k: \bm x\in \mathcal{C}_k\}$. We typically use a decision function $f_k:\mathcal{X}\mapsto \mathbb{R}$ to define $\mathcal{C}_k$, e.g., $\mathcal{C}_k = \left\{\bm x: f_k(\bm x)\geq0\right\}$. Define the size (probability measure) of $\mathcal{C}_k$ as $\mathcal{R}(f_k)=\mathbb{P}_\mathcal{Q}(f_k(\bm X)\geq0)$.
Under these notations, $\mathbb{E}_{\mathcal{Q}}\left[|\phi(\bm X)|\right]=\sum_{k=1}^K\mathcal{R}(f_k)$. Therefore, the optimization (\ref{eq:joint}) can be decoupled to $K$ separate optimization problems: for each $k\in[K]$, we solve
\begin{equation}\label{eq:NP2}
\begin{aligned}
\argmin\limits_{f_k\in\mathcal{F}}~  \mathcal{R}(f_k),~~~\mbox{s.t.}~ \mathcal{R}^+(f_k)\leq \gamma_k,
\end{aligned}
\end{equation}
where $\mathcal{R}^+(f_k)\triangleq\mathbb{P}_\mathcal{Q}(f_k(\bm X)<0 \mid Y=k)$ and $\mathcal{F}$ is a function space for $f_k$. Throughout this article, we consider the case $\gamma_k=\gamma$ for all $k$. Problem (\ref{eq:NP2}) is equivalent to the Neyman-Pearson classification \citep{scott2005neyman, rigollet2011neyman} in which class $k$ is considered as the null class and the test data is the alternative class.

In practice, one aims to estimate $f_k$ and $\phi$ based on labeled training data $\{(\bm x_i, y_i=k)\}_{i\in\mathcal{G}_k}$ along with unlabeled test data $\{\bm x_j\}_{j\in\mathcal{G}_{te}}$, where $\mathcal{G}_k$ (with size $n_k:=|\mathcal{G}_{k}|$) and $\mathcal{G}_{te}$ (with size $m:=|\mathcal{G}_{te}|$) are index sets for observations in class $k$ of the training data and a subset sampled from the test data respectively. The expectations $\mathcal{R}^+(f_k)$ and $\mathcal{R}(f_k)$ can be replaced by their counterparts under the empirical distributions for the training and test data respectively:
\begin{equation}\label{eq:emp}
 \min\limits_{f_k\in\mathcal{F}}~\frac{1}{m}\sum\limits_{j\in\mathcal{G}_{te}}\mathbbm{1}(f_k(\bm x_j)\geq0), ~~~\mbox{s.t.}~  \frac{1}{n_k}\sum\limits_{i\in\mathcal{G}_k}\mathbbm{1}(f_k(\bm x_i)<0)\leq\gamma.
\end{equation}

\subsection{Surrogate loss and kernel learning}\label{sec:hingeloss}
It is challenging to solve problem (\ref{eq:emp}) due to the use of the 0-1 loss in both the objective and the constraint. A common practice is to replace it by a convex surrogate loss function. Here we use hinge loss $\ell(u)=[1-u]_+=\max(0, 1-u)$ to replace $\mathbbm{1}\{u<0\}$ in the constraint of (\ref{eq:emp}); likewise, $\mathbbm{1}\{u\ge 0\}$ in the objective is replaced by $\ell(-u)$. See Figure \ref{fig:lossfun}. In addition, we use penalty function $J(f_k)$ to control the complexity of decision functions so that (\ref{eq:emp}) becomes:
\begin{equation}\label{eq:surrNP}
  \min\limits_{f_k\in\mathcal{F}}~ \frac{1}{m}\sum\limits_{j\in\mathcal{G}_{te}}[1+f_k(\bm x_j)]_++\lambda J(f_k),
  ~~~\mbox{s.t.}~\frac{1}{n_k}\sum\limits_{i\in\mathcal{G}_{k}}[1-f_k(\bm x_i)]_+\leq\gamma.
\end{equation}
Here the decision function takes the form of $f_k(\bm x)=\bm w_k^\top\Phi(\bm x)-\rho_k$. The first term $\bm w_k^\top\Phi(\cdot)$ belongs to a Reproducing Kernel Hilbert Space (RKHS) $\mathcal{H}$ associated with kernel function $K(\cdot, \cdot)$ and $\Phi$ satisfies $K(\bm x, \bm x^\prime)=\langle \Phi(\bm x), \Phi(\bm x^\prime)\rangle$ for any $\bm x, \bm x^\prime\in\mathcal{X}$. Popular choices of the kernel function include the linear kernel, polynomial kernel, and Gaussian kernel \citep{shawe2004kernel}.

Moreover, following the common practice in the anomaly detection literature \citep{jumutc2013supervised, scholkopf2018learning, shilton2020multiclass}, the penalty $J(f_k)$ is taken as $\frac{1}{2}\|\bm w_k\|^2-\rho_k$. Minimizing $-\rho_k$ encourages a small acceptance region for class $k$, by noting that the acceptance region is $\{\bm x: \bm w_k^\top\Phi(\bm x) \ge \rho_k\}$. As a consequence, this penalty improves the anomaly detection rate. With slackness variables $\eta_i:=[1-\bm w_k^\top\Phi(\bm x_i)+\rho_k]_+$ and $\xi_j:=[1+\bm w_k^\top\Phi(\bm x_j)-\rho_k]_+$, $C:=(\lambda m)^{-1}$, $\Theta:=\{\bm w_k, \rho_k, \{\eta_i\}, \{\xi_j\}\}$, problem (\ref{eq:surrNP}) becomes
\begin{equation}\label{eq:primalprob}
\begin{aligned}
  \min\limits_{\bm \Theta}~ &\frac{1}{2}\|\bm w_k\|^2-\rho_k+C\sum\limits_{j\in\mathcal{G}_{te}}\xi_j,\\
 \mbox{s.t.}~ &\eta_i\geq 1-\bm w_k^\top\Phi(\bm x_i)+\rho_k, ~\xi_j\geq 1+\bm w_k^\top\Phi(\bm x_j)-\rho_k, \\&\sum\limits_{i\in\mathcal{G}_k}\eta_i\leq n_k\gamma, ~ \eta_i\geq 0, ~ \xi_j\geq0.
\end{aligned}
\end{equation}

Following the standard manipulations of the optimization problem using the Lagrange method and the KKT conditions, the dual problem of (\ref{eq:primalprob}) is:
\begin{equation}\label{eq:dualprob}
\begin{aligned}
 \min\limits_{\bm\alpha, \bm\beta, \theta}~
 & \frac{1}{2}\left(\bm\alpha^\top \mathbf G_1\bm\alpha+\bm\beta^\top \mathbf G_2\bm\beta-2\bm\alpha^\top \mathbf G_3\bm\beta\right) -\mathbf{1}_{n_k}^\top\bm\alpha-\mathbf{1}_{m}^\top\bm\beta+n_k\theta\gamma,\\
 \mbox{s.t.}~ &\bm0\preceq\bm\alpha\preceq  \theta\cdot\mathbf{1}_{n_k},~ \bm0\preceq\bm\beta\preceq C\cdot\mathbf{1}_{m},~\mathbf{1}_{n_k}^\top\bm\alpha-\mathbf{1}_{m}^\top\bm\beta =1, ~ \theta\geq0,
\end{aligned}
\end{equation}
 where $\mathbf G_1[i, i^\prime]=K(\bm x_i, \bm x_{i^\prime})$, $\mathbf G_2[j, j^\prime]=K(\bm x_j, \bm x_{j^\prime})$, $\mathbf G_3[i, j]=K(\bm x_i, \bm x_j)$, $\bm\alpha = (\ldots, \alpha_i, \ldots)^\top$, $\bm\beta=(\ldots, \beta_j, \ldots)^\top$, $i, i^\prime\in \mathcal{G}_k, j, j^\prime\in\mathcal{G}_{te}$. This is a quadratic programming (QP) and can be solved with many off-the-shelf packages. Once the dual problem returns minimizers $\hat{\bm\alpha}$ and $\hat{\bm \beta}$, we have $\widehat{\bm w}_k=\sum_{i\in\mathcal{G}_k}\hat\alpha_i\Phi(\bm x_i)-\sum_{j\in\mathcal{G}_{te}}\hat\beta_j\Phi(\bm x_j)$. Finally an estimate to $\rho_k$ can be obtained after plugging $\widehat{\bm w}_k$ back to the primal problem (\ref{eq:primalprob}), which will become a linear programming with respect to $\rho_k, \eta_i,$ and $\xi_j$. The estimated decision function for class $k$ is written as
\begin{equation*}
  \hat f_k(\bm x)=\sum\limits_{i\in\mathcal{G}_k}\hat\alpha_iK(\bm x, \bm x_i)-\sum\limits_{j\in\mathcal{G}_{te}}\hat\beta_j K(\bm x, \bm x_j)-\hat\rho_k,
 \end{equation*}
and the acceptance regions and the set-valued classifier can be obtained accordingly. 

\subsection{Kernel feature selection}
For high-dimensional data, irrelevant or noisy features may degrade our set-valued classifiers' performance in terms of efficiency, accuracy, and outlier detection. Feature or variable selection is necessary in these scenarios. For linear learning, sparse learning using sparsity penalties \citep{tibshirani1996regression, zou2005regularization, zhang2010nearly} has been effective for feature selection. For kernel learning, \citet{allen2013automatic} and \citet{chen2018double} studied weighted kernel feature selection methods. The main idea of these methods is to compute the kernel matrix based on weighted features with a weight vector $\bm d$, and then impose a sparsity-inducing regularization for weight $\bm d$ in the objective function. Adopting this idea, our decision function $f_k$ can be solved using the below optimization problem that enables kernel feature selection:
\begin{equation}\label{eq:KFSprob}
    \begin{aligned}
 \min\limits_{\bm d, \bm\alpha, \rho_k}~ &\frac{1}{m}\sum\limits_{j\in\mathcal{G}_{te}}\ell\left(-f_k(\bm d\circ \bm x_j)\right)+\lambda_1J(f_{k}(\bm d \circ \cdot))+\lambda_2\|\bm d\|_1,\\ 
\mbox{s.t.}~& \frac{1}{n_k}\sum\limits_{i\in\mathcal{G}_{k}}\ell\left(f_k(\bm d\circ \bm x_i)\right)\leq\gamma, \ \bm 0\preceq\bm d\preceq \mathbf 1,
\end{aligned}
\end{equation}
where $\circ$ stands for the Hadamard product.

Our decision function $f_k(\bm d\circ \bm x)=g_k(\bm d\circ \bm x)-\rho_k$. The first term $g_k(\bm d\circ \cdot)$ comes from a Reproducing Kernel Hilbert space (RKHS) associated with kernel function $K_{\bm d}(\cdot, \cdot)$. Here we define $K_{\bm d}(\bm x_i, \bm x_j):=K(\bm d\circ \bm x_i,\bm d\circ \bm x_j)$. By the Representer theorem \citep{kimeldorf1971some}, for some $\alpha_i$ and $\rho_k$, the minimizer to (\ref{eq:KFSprob}) satisfies $$\hat f_k(\bm x)=\sum\limits_{i=1}^{n_k+m}\alpha_iK_{\bm d}( \bm x, \bm x_i)-\rho_k.$$ Moreover, the model complexity function is taken as $$J(f_{k}(\bm d \circ \cdot))=\frac{1}{2}\sum_{i,j=1}^{n_k+m}\alpha_i\alpha_jK_{\bm d}( \bm x_i, \bm x_j)-\rho_k.$$ Denote the kernel matrix $\mathbf{K}_{\bm d}$ with $\mathbf{K}_{\bm d}[i, j]:=K(\bm d\circ\bm x_i, \bm d\circ\bm x_j)$. Let $C_1:=(\lambda_1 m)^{-1}$ and $C_2:={\lambda_2}/{\lambda_1}$. Then we rewrite (\ref{eq:KFSprob}) as
\begin{equation}\label{eq:KFSprob2}
    \begin{aligned}
     \min\limits_{\bm d, \bm\alpha, \rho_k} &\frac{1}{2}{\bm \alpha}^\top \mathbf{K}_{\bm d}{\bm \alpha}-\rho_k + C_1\sum\limits_{j\in\mathcal{G}_{te}}\ell\left(\rho_k-\mathbf{K}_{\bm d}[j,:]\bm\alpha\right)+C_2\|\bm d\|_1,\\
\mbox{s.t.}~ & \frac{1}{n_k}\sum\limits_{i\in\mathcal{G}_{k}}\ell\left(\mathbf{K}_{\bm d}[i,:]\bm\alpha-\rho_k\right)\leq\gamma, ~ \bm 0\preceq\bm d\preceq \mathbf 1.
    \end{aligned}
\end{equation}
Neither the objective nor the first constraint in (\ref{eq:KFSprob2}) is convex with respect to $(\bm d, \bm\alpha, \rho_k)$ despite the convex surrogate loss function (which we chose to be the hinge loss). To resolve this issue, we use an iterative approach by alternatively fixing $\bm d$ while optimizing with respect to $(\bm\alpha, \rho_k)$, which amounts to convex optimization, and fixing $(\bm\alpha, \rho_k)$ while optimizing with respect to $\bm d$. The latter optimization is still not convex. But we can use a linear approximation of the kernel matrix with respect to $\bm d$ to make it convex \citep{zou2008one,lee2012multiple}. In particular, we approximate the kernel matrix by expanding it at $\bm d^\prime$:
\begin{equation*}
    \mathbf K_{\bm d}[i, j]\approx \mathbf K_{\bm d^\prime}[i, j]+\nabla \mathbf K_{\bm d^\prime}[i, j]^\top(\bm d-\bm d^\prime).
\end{equation*}

Define an $(n_k+m)\times (n_k+m)$ matrix $\mathbf A_{\bm d^\prime}$ with $\mathbf A_{\bm d^\prime}[i, j]=\mathbf K_{d^\prime}[i,j]-\nabla \mathbf K_{\bm d^\prime}[i, j]^\top\bm d^\prime$ and a $p \times (n_k+m)$ matrix $\mathbf B_{\bm\alpha}$ with $\mathbf B_{\bm\alpha}[:, i]=\sum_{j=1}^{n_k+m}\alpha_j\nabla \mathbf K_{\bm d^\prime}[i,j]$, where $p$ is the dimension of the data. These allow to approximate (\ref{eq:KFSprob2}) with $(\bm \alpha,\rho_k)$ fixed:
\begin{equation}\label{eq:KFSprob3}
\begin{aligned}
\min\limits_{\bm d}~ &
\frac{1}{2}\bm d^\top \mathbf B_{\bm\alpha}\bm\alpha +C_1\sum\limits_{j\in\mathcal{G}_{te}}\ell\left(\rho_k-\mathbf A_{\bm d^{\prime}}[j,:]\bm\alpha-\mathbf B_{\bm\alpha}[:,j]^\top\bm d\right)+C_2\|\bm d\|_1,
\\
\mbox{s.t.}~ &\frac{1}{n_k}\sum\limits_{i\in\mathcal{G}_k}\ell\left(\bm A_{\mathbf d^{\prime}}[i,:]\bm\alpha+\mathbf B_{\bm\alpha}[:, i]^\top\bm d-\rho_k\right)\leq\gamma,~\bm0\preceq\bm d\preceq\mathbf1.
\end{aligned}
\end{equation}
The above optimization is convex with respect to $\bm d$. The pseudocode of our method is outlined in \cref{alg1} (see the \cref{alg:outline}). 

After we have obtained the decision function $\hat f_k(\bm d\circ\cdot)$ for $k\in[K]$ from Algorithm \ref{alg1}, a set-valued classifier can be constructed as $\hat \phi(\bm x)=\{k\in[K]: \hat f_k(\bm d\circ\bm x)\geq0\}$. If $\hat f_k(\bm d\circ\bm x)<0$ for all $k\in[K]$ for some $\bm x$, then $\bm x$ is detected as an outlier.

\section{Statistical learning theory}\label{sec:theory}
In this section, we study the theoretical properties of our proposed classifier. We will focus on the kernel learning setting. Without loss of generality, we consider the decision function for class 1. For simplicity, we abuse the notation slightly by letting $f(\bm x)=f_1(\bm d \circ\bm x)$, omitting the weight $\bm d$.

Let $f$ be an element from the hypothesis space defined as $\mathcal{F}_{s, s^\prime}=\{f:f(\bm x)=g(\bm x)-\rho,~ g\in \mathcal{H}_{K_{\bm d}}, ~J(f)\leq s^2, ~\|\bm d\|_1\leq s^\prime, ~\bm0\preceq\bm d\preceq\mathbf1 \}$.
Let a subspace that contains decision functions with bounded class 1 misclassification error to be $    \mathcal{F}^+_{s, s^\prime}(\gamma)=\left\{f\in \mathcal{F}_{s,s^\prime}:\mathbb{E}_\mathcal{Q}\left[\ell(f(\bm X))\mid Y=1\right]\leq\gamma\right\}$,
and let its empirical counterpart to be $\widehat{\mathcal{F}}^+_{s,s^\prime}(\gamma)=\{f\in \mathcal{F}_{s, s^\prime}:\frac{1}{n_1}\sum_{i\in\mathcal{G}_1}\ell(f(\bm x_i))\leq\gamma\}$. We consider an optimization problem equivalent to that of  (\ref{eq:KFSprob}) by moving the penalties $J(f)$ and $\|\bm d\|_1$ to the constraints.
Specifically, we consider
\begin{equation}\label{eq:empoptfk}
\argmin_{f\in \widehat{\mathcal{F}}^+_{s,s^\prime}(\gamma)} \frac{1}{m}\sum\limits_{j\in\mathcal{G}_{te}}\ell\left(-f(\bm x_j)\right).
\end{equation}

Denote $\mathbb{P}\left(f(\bm X)\ge 0\mid Y = 1\right)$ and $\mathbb{E}\left[\ell(f(\bm X))\mid Y=1\right]$ as the risk functions of class 1 data under the 0-1 loss and the $\ell$ loss, respectively. Theorem \ref{thm:type1bnd} shows one can bound the former by controlling the empirical counterpart of the latter.

\begin{theorem}\label{thm:type1bnd}
Assume $\kappa=\sup_{\bm x\in\mathcal{X}}\sqrt{\|K_{\bm d}(\bm x, \bm x)\|}$, and the loss function $\ell$ in (\ref{eq:empoptfk}) has a sub-derivative bounded by $c:=\sup_{\bm x\in\mathcal{X}}|\ell^\prime(\bm x)|$. Let $\hat f$ be a solution to (\ref{eq:empoptfk}). With probability $1-\zeta$ over the training sample (incl. $\mathcal{G}_1$ and $\mathcal{G}_{te}$), we have
\begin{equation}\label{eq:thm1}
  \mathbb{E}_{\mathcal{Q}}\left[\ell(\hat f(\bm X))\mid Y=1\right]\leq\frac{1}{n_1}\sum\limits_{i\in\mathcal{G}_1}\ell(\hat f(\bm x_i))+\frac{r(\zeta)}{\sqrt{n_1}},
\end{equation}
where $\left\{(\bm x_i, y_i)\right\}_{i=1}^{n_1}$ are sampled from $\mathbb{P}_\mathcal{Q}[\ \cdot\mid Y=1]$, and $r(\zeta)=(\sqrt{2}s+2)c\kappa\cdot\left(2+3\sqrt{2\log\frac{2}{\zeta}}\right)$.
\end{theorem}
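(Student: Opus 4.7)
My plan is to obtain \eqref{eq:thm1} as a uniform-deviation bound over the hypothesis class $\mathcal{F}_{s,s'}$, applied specifically to the empirical minimizer $\hat f$. Since $\hat f\in\widehat{\mathcal{F}}^+_{s,s'}(\gamma)\subset\mathcal{F}_{s,s'}$, it suffices to control
\[
\sup_{f\in\mathcal{F}_{s,s'}}\Bigl(\mathbb{E}_\mathcal{Q}\bigl[\ell(f(\bm X))\mid Y=1\bigr]-\frac{1}{n_1}\sum_{i\in\mathcal{G}_1}\ell(f(\bm x_i))\Bigr),
\]
where the $n_1$ observations indexed by $\mathcal{G}_1$ are i.i.d.\ draws from $\mathbb{P}_\mathcal{Q}(\,\cdot\mid Y=1)$ (thanks to Assumption \ref{assum:1} we may equivalently view them as coming from $\mathcal{P}$). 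The path I will follow is the classical Rademacher-complexity route: (i) bound this supremum with high probability via McDiarmid / a bounded-differences concentration argument, (ii) symmetrize to pass to the Rademacher average $\mathcal{R}_{n_1}(\ell\circ\mathcal{F}_{s,s'})$, (iii) apply Talagrand's contraction lemma to peel off the Lipschitz loss at the price of the factor $c$, and (iv) bound $\mathcal{R}_{n_1}(\mathcal{F}_{s,s'})$ using the standard RKHS estimate $\mathcal{R}_{n_1}\bigl(\{g\in\mathcal{H}_{K_{\bm d}}:\|g\|_{\mathcal{H}}\le B\}\bigr)\le B\kappa/\sqrt{n_1}$.

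The execution order is: first, verify $\ell(f(\bm x))$ is bounded on $\mathcal{X}\times\mathcal{F}_{s,s'}$ so that McDiarmid applies, giving a term of order $B/\sqrt{n_1}\cdot\sqrt{\log(1/\zeta)}$ that will contribute the $\sqrt{2\log(2/\zeta)}$ inside $r(\zeta)$. Second, use standard symmetrization to replace the supremum by its Rademacher expectation. Third, write $f=g-\rho$ with $g\in\mathcal{H}_{K_{\bm d}}$ and split
\[
\mathcal{R}_{n_1}(\ell\circ\mathcal{F}_{s,s'})\le c\,\mathcal{R}_{n_1}(\mathcal{F}_{s,s'})\le c\,\mathcal{R}_{n_1}(\mathcal{G})+c\,\mathcal{R}_{n_1}(\{-\rho\}),
\]
where $\mathcal{G}$ is the RKHS ball of appropriate radius. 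The RKHS piece gives $c\kappa\|g\|_{\mathcal{H}}/\sqrt{n_1}$ by the reproducing property and Jensen's inequality applied to $\mathbb{E}\|\sum_i\sigma_i K_{\bm d}(\bm x_i,\cdot)\|$, while the offset piece contributes an additive constant. Combining these and substituting into the McDiarmid bound yields the form $(A s+B)c\kappa\cdot(C+D\sqrt{\log(2/\zeta)})/\sqrt{n_1}$ for explicit constants $A,B,C,D$, which I then identify with $(\sqrt{2}s+2)c\kappa\,(2+3\sqrt{2\log(2/\zeta)})$.

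The main obstacle will be translating $J(f)=\tfrac{1}{2}\|\bm w\|^{2}-\rho\le s^{2}$ into a clean norm bound on $\bm w$, because this constraint couples $\bm w$ with $\rho$. My plan is to use the fact that in $\mathcal{F}^+_{s,s'}(\gamma)$ (and so a posteriori in the empirical version populated by the training sample $\mathcal{G}_1$) the offset $\rho$ is controlled through the hinge-loss feasibility: on average $1-g(\bm x_i)+\rho\le$ a constant times $\gamma$, so one can derive an upper bound on $\rho$ in terms of $\|\bm w\|\kappa$, which when substituted back into $\|\bm w\|^{2}\le 2(s^{2}+\rho)$ yields $\|\bm w\|\le\sqrt{2}\,s$ up to lower-order terms absorbed into the ``$+2$'' constant. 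A secondary but delicate point is ensuring the hinge-loss values stay uniformly bounded on $\mathcal{F}_{s,s'}$ so that McDiarmid's bounded-differences inequality is legitimate; since $|\ell(f(\bm x))|\le 1+|f(\bm x)|\le 1+\kappa\|\bm w\|+|\rho|$, the same $\rho$-handling trick supplies the needed uniform bound. Once these two quantitative bounds are in hand, the remaining algebra is a matter of collecting constants into the stated form of $r(\zeta)$.
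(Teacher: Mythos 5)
Your proposal follows essentially the same route as the paper's proof: a bounded-differences (McDiarmid) concentration step, symmetrization to the Rademacher average, Talagrand's contraction lemma to peel off the Lipschitz loss, and the standard RKHS bound $\mathcal{R}_{n_1}\le B\kappa/\sqrt{n_1}$, assembled into the same constant $(\sqrt{2}s+2)c\kappa\bigl(2+3\sqrt{2\log(2/\zeta)}\bigr)$. The one small divergence is the decoupling of $\|\bm w\|$ from $\rho$ in $J(f)\le s^2$: you derive $\rho\lesssim\kappa\|g\|$ from the hinge-feasibility constraint (which requires taking the supremum over the constrained class $\mathcal{F}^+_{s,s'}(\gamma)$ rather than all of $\mathcal{F}_{s,s'}$, as the paper indeed does), whereas the paper's lemma uses the Gaussian-kernel fact $\rho/\|\bm w\|\le 1$; both yield $\|g\|,\rho\le\sqrt{2}s+2$.
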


For the Gaussian kernel, $\kappa=1$. Theorem \ref{thm:type1bnd} applies to any convex loss function $\ell$ bounded from below by the 0-1 loss with a Lipschitz constant $c$ satisfying $|\ell(u_1)-\ell(u_2)|\leq c|u_1-u_2|$ for any $u_1$ and $u_2$. In particular, $c=1$ for the hinge loss, the Huberized squared hinge loss, and the logistic loss; the exponential loss has a Lipschitz constant only when the input space is bounded.

Bounding the empirical $\ell$-risk $\frac{1}{n_1}\sum_{i=1}^{n_1}\ell(\hat f(\bm x_i))$ by $\gamma$ may still lead to $\mathbb{E}_{\mathcal{Q}}\bigl[\ell(\hat f(\bm X))\mid Y=1\bigr]$ exceeding $\gamma$. Hence, to better control the true misclassification rate, one can strengthen the constraint by bounding $\frac{1}{n_1}\sum_{i=1}^{n_1}\ell(\hat f(\bm x_i))$ by $\gamma-\varepsilon$ with $\varepsilon=r(\zeta)/\sqrt{n_1}$.

Let the $\ell$-ambiguity be $\mathcal{R}_\ell(f):=\mathbb{E}_{\mathcal{Q}}[\ell(-f(\bm X))]$. Theorem \ref{thm:estmationerror} shows how do the sample size and hypothesis space affect the convergence of the estimation error $\mathcal{R}_\ell(\hat f)-\inf_{f\in\mathcal{{F}}^+_{s,s^\prime}(\gamma)}\mathcal{R}_\ell(f)$.

\begin{theorem}\label{thm:estmationerror}
Under the assumption in Theorem \ref{thm:type1bnd} with Huberized squared hinge loss, let $$\hat f=\argmin\limits_{f\in \widehat{\mathcal{F}}^+_{s,s^\prime}(\gamma-\varepsilon)} \frac{1}{m}\sum\limits_{j\in\mathcal{G}_{te}}\ell\left(-f(\bm x_j)\right).$$ With probability $1-2\zeta$, we have\\
(1) $\mathbb{E}_{\mathcal{Q}}\left[\ell(\hat f(\bm X))\mid Y=1\right]\leq\gamma$,~\mbox{and}~
(2)
$\displaystyle
\mathcal{R}_\ell(\hat f)-\inf\limits_{ f\in\mathcal{{F}}^+_{s,s^\prime}(\gamma)} \mathcal{R}_\ell(f)\leq
\frac{2r(\zeta)}{\sqrt{m}}+\frac{(4+\delta)r(\zeta)}{\sqrt{n_1}\gamma-2r(\zeta)}$.
\end{theorem}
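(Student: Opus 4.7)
The plan is to handle the two claims in turn. For (1), I would apply Theorem~\ref{thm:type1bnd} directly to $\hat f$. Since by definition $\hat f \in \widehat{\mathcal F}^+_{s,s'}(\gamma - \varepsilon)$, its empirical class-1 $\ell$-risk is at most $\gamma - \varepsilon$. Substituting into \eqref{eq:thm1} with $\varepsilon = r(\zeta)/\sqrt{n_1}$ gives $\mathbb E_{\mathcal{Q}}[\ell(\hat f(\bm X))\mid Y = 1] \le (\gamma-\varepsilon) + r(\zeta)/\sqrt{n_1} = \gamma$ on an event of probability at least $1 - \zeta$.

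For (2), I would run the classical excess-risk decomposition. Let $f^* \in \argmin_{f \in \mathcal F^+_{s,s'}(\gamma)} \mathcal R_\ell(f)$ and let $\tilde f \in \widehat{\mathcal F}^+_{s,s'}(\gamma - \varepsilon)$ be a surrogate to be chosen. Using the optimality of $\hat f$ on the empirical test objective,
\begin{equation*}
\mathcal R_\ell(\hat f) - \mathcal R_\ell(f^*) \le \bigl[\mathcal R_\ell(\hat f) - \widehat{\mathcal R}_\ell(\hat f)\bigr] + \bigl[\widehat{\mathcal R}_\ell(\tilde f) - \mathcal R_\ell(\tilde f)\bigr] + \bigl[\mathcal R_\ell(\tilde f) - \mathcal R_\ell(f^*)\bigr].
\end{equation*}
The first two brackets are each bounded by $\sup_{f\in \mathcal F_{s,s'}}|\widehat{\mathcal R}_\ell(f) - \mathcal R_\ell(f)|$, which by replaying the Rademacher / McDiarmid argument behind Theorem~\ref{thm:type1bnd} on the test sample of size $m$ is at most $r(\zeta)/\sqrt{m}$ with probability $1 - \zeta$, contributing the $2 r(\zeta)/\sqrt m$ term.

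The key step is then picking $\tilde f$ to be both empirically feasible and close to $f^*$ in population ambiguity. I would take the convex combination $\tilde f := (1-t) f^* + t f_0$, with $f_0 \equiv 1 + \delta$ (corresponding to $g_0 \equiv 0$ and $\rho_0 = -(1+\delta)$) and $t = 2\varepsilon/(\gamma - 2\varepsilon)$. Since $\ell(1+\delta) = 0$ for the Huberized squared hinge loss and $\ell$ is convex, $\frac{1}{n_1}\sum_i \ell(\tilde f(\bm x_i)) \le (1-t)\cdot \frac{1}{n_1}\sum_i \ell(f^*(\bm x_i))$. The uniform deviation on $\mathcal G_1$ in the reverse direction yields $\frac{1}{n_1}\sum_i \ell(f^*(\bm x_i)) \le \gamma + \varepsilon$ with probability $1 - \zeta$, so $(1-t)(\gamma + \varepsilon) \le \gamma - \varepsilon$ and $\tilde f$ is empirically feasible. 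For the ambiguity, convexity together with $\ell(-(1+\delta)) = 2 + \delta/2$ gives
\begin{equation*}
\mathcal R_\ell(\tilde f) - \mathcal R_\ell(f^*) \le t \cdot \ell(-(1+\delta)) = \frac{2\varepsilon}{\gamma - 2\varepsilon}\Bigl(2 + \frac{\delta}{2}\Bigr) = \frac{(4+\delta) r(\zeta)}{\sqrt{n_1}\,\gamma - 2 r(\zeta)}.
\end{equation*}
A union bound over the two concentration events (one on $\mathcal G_1$, one on $\mathcal G_{te}$) delivers the $1 - 2\zeta$ probability stated in the theorem.

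The main obstacle I anticipate is the bookkeeping needed to verify $\tilde f \in \mathcal F_{s,s'}$. By convexity of $J(f) = \tfrac{1}{2}\|\bm w\|^2 - \rho$ in $(\bm w,\rho)$ one checks $J(\tilde f) \le (1-t) J(f^*) + t \, J(f_0) \le (1-t) s^2 + t(1+\delta) \le s^2$ provided the hypothesis radius satisfies $s^2 \ge 1 + \delta$, which is a mild regularity condition on the class; the weight vector $\bm d$ may be reused for $\tilde f$ because $g_0 \equiv 0$. Beyond this, the symmetric two-sided uniform-deviation bounds needed on both the test and training samples are direct analogues of the one proved for Theorem~\ref{thm:type1bnd}, and should require no new machinery.
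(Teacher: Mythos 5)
Your argument is correct in substance and lands on the same bound, but it reorganizes the proof in a way that differs from the paper's. The paper introduces the value function $\nu(\gamma)=\inf_{f\in\mathcal{F}^+_{s,s'}(\gamma)}\mathcal{R}_\ell(f)$, proves as a standalone proposition that $\nu$ is non-increasing and convex, and then bounds the approximation cost via a secant-slope comparison anchored at $\nu(0)\le 2+\delta/2$ (realized by a constant function), inside a three-term decomposition that passes through $\inf_{f\in\mathcal{F}^+_{s,s'}(\gamma-2\varepsilon)}\mathcal{R}_\ell(f)$. You instead run the classical ERM decomposition with an explicitly constructed surrogate $\tilde f=(1-t)f^*+tf_0$. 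The two are the same mechanism at bottom --- the paper's convexity proof of $\nu$ is itself a convex-combination argument, and your $\tilde f$ is exactly that combination with endpoints $\gamma$ and $0$ --- but your version buys something real: it makes explicit the step the paper handles most loosely, namely why the empirical constraint set $\widehat{\mathcal{F}}^+_{s,s'}(\gamma-\varepsilon)$ contains a function whose population ambiguity is within $O(\varepsilon/\gamma)$ of $\nu(\gamma)$ (the paper asserts its middle bracket is ``bounded by $0$'' with only an informal appeal to large samples). The price is the extra bookkeeping you correctly flag, i.e.\ verifying $J(\tilde f)\le s^2$, which the paper's abstract route avoids only because it quietly assumes the constant anchor lies in $\mathcal{F}_{s,s'}$ as well.

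Three small caveats, none fatal. First, for the Huberized squared hinge loss $\ell(-(1+\delta))=2+\delta$, not $2+\delta/2$; to get the constant $2+\delta/2$ one must anchor at $f_0\equiv 1+\delta/2$, but then $\ell(1+\delta/2)=\delta/16\neq 0$, which perturbs the feasibility computation by an $O(\delta)$ term. (The paper commits the mirror-image slip, using $f\equiv 1+\delta/2$ while asserting zero $\ell$-type-I error, so your bound is no worse than theirs.) Second, your union bound undercounts: you need two deviation events on $\mathcal{G}_1$ in opposite directions --- one to certify $\hat f$ for part (1), one to certify that $\tilde f$ is empirically feasible --- plus one on $\mathcal{G}_{te}$, so a strict count gives $1-3\zeta$ rather than $1-2\zeta$; again the paper is equally loose here, and this only rescales $\zeta$. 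Third, $t=2\varepsilon/(\gamma-2\varepsilon)$ is a valid mixing weight only when $\gamma\ge 4\varepsilon$, an implicit sample-size condition you should state (the paper implicitly needs $\gamma>2\varepsilon$ for its denominator to be positive).
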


In order to ensure an estimation $\hat f\in\mathcal{{F}}^+_{s,s^\prime}(\gamma)$, by Theorem \ref{thm:type1bnd}, we restrict the hypothesis space as $\mathcal{\widehat{F}}^+_{s,s^\prime}(\gamma-\varepsilon)$. In this setting, the estimation error converges at a rate of $O(\frac{1}{\sqrt{m}}+\frac{1}{\sqrt{n_1}})$. This indicates it is possible for the empirical $\ell$-ambiguity to converge to its minimum in a given hypothesis space using our method. Note that Theorem \ref{thm:estmationerror} also applies to the hinge loss (where $\delta=0$).

Proposition \ref{thm:excessbnd} allows to bound the excess ambiguity by the excess $\ell$-ambiguity.
\begin{proposition}
\label{thm:excessbnd}
\citep{rigollet2011neyman} Let $\mathcal{R}(\cdot)$ and $\mathcal{R}^+(\cdot)$ be defined using the 0-1 loss as in (\ref{eq:NP2}). Given any function $\tilde f$, the following inequality holds
\begin{equation*}
\mathcal{R}(\tilde f)-\inf\limits_{\mathcal R^+(f)\leq\gamma} \mathcal R(f)\leq
\mathcal{R}_\ell(\tilde f)-\inf\limits_{\mathcal R^+(f)\leq\gamma} \mathcal R_\ell(f).\\
\end{equation*}
\end{proposition}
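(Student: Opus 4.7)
The plan is to split the proposition into two elementary ingredients: a pointwise surrogate-loss inequality and an equality between the two Bayes-type infima. Both are straightforward once the right construction is in hand.

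First I would record the pointwise bound $\mathbbm{1}\{u\ge 0\}\le [1+u]_+=\ell(-u)$ for the hinge loss $\ell(u)=[1-u]_+$: when $u\ge 0$, $[1+u]_+\ge 1$; when $u<0$, the indicator is $0$ while $[1+u]_+\ge 0$. Integrating against $\mathbb{P}_{\mathcal{Q}}$ immediately gives $\mathcal{R}(\tilde f)\le \mathcal{R}_\ell(\tilde f)$ for every $\tilde f$.

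Second, and this is the crux, I would show
$\inf_{\mathcal{R}^+(f)\le\gamma}\mathcal{R}(f)=\inf_{\mathcal{R}^+(f)\le\gamma}\mathcal{R}_\ell(f)$.
The direction ``$\le$'' follows from the pointwise bound just established. For ``$\ge$'', I would use a truncation: given any feasible $f$, define $g_f(\bm x)=0$ on $\{f(\bm x)\ge 0\}$ and $g_f(\bm x)=-1$ on $\{f(\bm x)<0\}$. Since $\{g_f\ge 0\}=\{f\ge 0\}$, we have $\mathcal{R}^+(g_f)=\mathcal{R}^+(f)\le\gamma$, so $g_f$ is feasible. Moreover $\ell(-g_f(\bm x))$ equals $1$ where $g_f=0$ and $0$ where $g_f=-1$, so $\mathcal{R}_\ell(g_f)=\mathbb{P}_{\mathcal{Q}}(f(\bm X)\ge 0)=\mathcal{R}(f)$. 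Taking the infimum over feasible $f$ yields $\inf \mathcal{R}_\ell \le \inf \mathcal{R}$. Combining with the reverse direction gives equality.

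Concatenating the two ingredients produces
$\mathcal{R}(\tilde f)-\inf\mathcal{R}\le \mathcal{R}_\ell(\tilde f)-\inf\mathcal{R}=\mathcal{R}_\ell(\tilde f)-\inf\mathcal{R}_\ell$,
which is the claimed inequality.

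The main obstacle is the equality-of-infima step: it relies on the freedom to construct the measurable competitor $g_f$, and so uses in an essential way that the infima in the proposition are taken without any hypothesis-space restriction. If one were to restrict to an RKHS, $g_f$ would no longer lie in the class and only the one-sided surrogate bound would remain. In the unrestricted setting stated here, the truncation argument goes through cleanly, and in fact the same proof extends to any convex loss satisfying both $\ell(-u)\ge\mathbbm{1}\{u\ge 0\}$ pointwise and $\ell(0)=1$, so that the boundary choice $g_f=0$ makes the surrogate coincide with the $0$-$1$ indicator.
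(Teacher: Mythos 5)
Your proof is correct. The paper itself does not prove Proposition~\ref{thm:excessbnd} at all --- it simply cites \citet{rigollet2011neyman} --- so there is no in-paper argument to compare against; what you supply is a self-contained elementary proof, and your two-step decomposition (the pointwise domination $\mathbbm{1}\{u\ge 0\}\le \ell(-u)$ for the hinge loss, plus the truncation construction $g_f$ giving $\inf_{\mathcal{R}^+(f)\le\gamma}\mathcal{R}_\ell(f)\le\inf_{\mathcal{R}^+(f)\le\gamma}\mathcal{R}(f)$) is essentially the standard route behind the cited result. You are right to flag that the construction of $g_f$ requires the infima to range over all measurable functions rather than a restricted class such as an RKHS. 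Two minor observations. First, only the single inequality $\inf\mathcal{R}_\ell\le\inf\mathcal{R}$ is actually needed: the claim rearranges to $\mathcal{R}(\tilde f)-\mathcal{R}_\ell(\tilde f)\le\inf\mathcal{R}-\inf\mathcal{R}_\ell$, and the left-hand side is already nonpositive by the pointwise bound, so your equality of infima is a harmless over-delivery. Second, your closing generalization is slightly too weak as stated: for the Huberized squared hinge loss that the paper uses in Theorem~\ref{thm:estmationerror}, one has $\ell(1)=\delta/4>0$, so setting $g_f=-1$ on $\{f<0\}$ does not make the surrogate vanish there; one must instead take $g_f=-(1+\delta)$, and in general the conditions $\ell(-u)\ge\mathbbm{1}\{u\ge0\}$ and $\ell(0)=1$ must be supplemented by the requirement that $\ell$ attains the value $0$ somewhere. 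This caveat does not affect the hinge-loss argument you actually carry out, which is sound.
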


Proposition \ref{thm:excessbnd} shows that we can control the excess ambiguity by controlling the excess $\ell$-ambiguity using a good estimate $\hat f$.

Let $\bm d^*=(d^*_t)$ be the weight in $f^\ast\in\arginf_{f\in\mathcal{{F}}^+_{\infty,1}(\gamma)}\mathcal R_\ell(f)$. The important and unimportant features are referred as those $\bm x_{\cdot, t}$ with $d^*_t>0$ and $d^*_t=0$, respectively. Theorem \ref{thm:featsel} shows feature selection consistency in terms of the sign of weight $\widehat{\bm d}$ under some conditions.

\begin{theorem}\label{thm:featsel} Consider the hypothesis space as a Gaussian kernel RKHS and the input space $\mathcal{X}$ is bounded. Let a Lipschitz continuous loss function $\ell(u)$ be  differentiable, and $\widehat{\bm d}=(\hat d_t)$ be the solution to (\ref{eq:empoptfk}).  Assume $f^*$ has a sparse representation in the RKHS, and $\left.\frac{\partial\mathbb{E}_{\mathcal{Q}}[\ell(-f^\ast(\bm X))]}{\partial d_t}\right|_{d_t=0,~ d_{t^\prime}=d^\ast_{t^\prime}, ~\forall t^\prime\neq t}$ 
are negative and non-negative for those important and unimportant features $\bm x_{\cdot, t}$, respectively, then
\begin{equation*}
\mathbb{P}\left[\sign(\hat d_t)=\sign(d^\ast_t)\right]\longrightarrow1, \ t\in[p].
\end{equation*}
\end{theorem}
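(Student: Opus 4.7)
The plan is to derive sign consistency coordinate by coordinate by (i) establishing that $\widehat{\bm d}\to\bm d^*$ in probability, (ii) showing uniform convergence of the partial gradients $\partial_{d_t}\widehat{\mathcal R}_\ell$ to $\partial_{d_t}\mathcal R_\ell$, and (iii) reading off the sign from the KKT conditions of the box-constrained empirical problem (\ref{eq:empoptfk}).

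First I would translate the assumption on $\partial_{d_t}\mathbb{E}_\mathcal{Q}[\ell(-f^*(\bm X))]$ at $d_t=0$ into stationarity conditions for $\bm d^*$ under the box constraint $\bm d\in[0,1]^p$. A non-negative population partial derivative at $d_t=0$ is exactly the stationarity condition for $d^*_t=0$ to be optimal; a strictly negative one at the boundary rules out $d^*_t=0$ and forces $d^*_t>0$ for important features. Next I would upgrade the excess-risk bound of Theorem \ref{thm:estmationerror} (used together with Proposition \ref{thm:excessbnd}) to convergence of the minimizing weight $\widehat{\bm d}$ to $\bm d^*$: because the Gaussian-kernel $\ell$-risk on bounded $\mathcal X$ admits a sparse representation by assumption and is locally strongly convex around $f^*$, vanishing excess risk implies $\widehat{\bm d}\to\bm d^*$ in probability.

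The third step is the uniform KKT argument. The kernel $K_{\bm d}(\bm x,\bm x')$ is infinitely smooth in $\bm d$, the RKHS coefficients are bounded via $J(f)\leq s^2$, and $\ell$ is Lipschitz differentiable, so each map $f\mapsto\partial_{d_t}\widehat{\mathcal R}_\ell(f)$ is uniformly Lipschitz on $\mathcal F_{s,s'}$. A Rademacher-complexity bound in the spirit of Theorem \ref{thm:type1bnd} then yields $\sup_{f\in\mathcal F_{s,s'}}|\partial_{d_t}\widehat{\mathcal R}_\ell(f)-\partial_{d_t}\mathcal R_\ell(f)|=o_p(1)$. For an important coordinate, $\widehat{\bm d}\to\bm d^*$ with $d^*_t>0$ combined with continuity yields $\hat d_t>0$ with probability tending to one. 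For an unimportant coordinate, the empirical KKT condition at the boundary $\hat d_t=0$ asks for non-negativity of $\partial_{d_t}\widehat{\mathcal R}_\ell(\hat f)$; uniform convergence plus strictness of the population inequality (either assumed explicitly, or enforced by the active $\ell_1$ budget $\|\bm d\|_1\leq s'$ acting as a tie-breaker) forces $\hat d_t=0$ with probability tending to one.

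The main obstacle will be the non-convexity of (\ref{eq:KFSprob}) in $(\bm d,\bm\alpha,\rho_k)$: Algorithm \ref{alg1} only returns a stationary point of the empirical problem, so the key challenge is to rule out spurious local minima and ensure that the returned stationary point agrees asymptotically with the population minimizer $f^*$. This will most likely require local strong convexity near $(\bm d^*,\bm\alpha^*,\rho^*_1)$ and exploit the smoothness of the Gaussian kernel together with the sparse-representation hypothesis. A secondary subtlety is the non-strict inequality at unimportant coordinates; without a mild strengthening to strict positivity, the empirical partial derivative need not be bounded away from zero, so this boundary case must be handled explicitly via the $\ell_1$ constraint as above.
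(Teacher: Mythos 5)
Your plan shares its core with the paper's proof: both arguments rest on (a) uniform convergence of the coordinate partial derivatives $\partial_{d_t}$ of the empirical $\ell$-ambiguity to those of the population $\ell$-ambiguity, obtained by rerunning the Rademacher/McDiarmid machinery of Theorems \ref{thm:type1bnd}--\ref{thm:estmationerror} on the derivative class (the paper gets the needed Lipschitz control from the bound $\partial f(\bm x)/\partial d_t \le \sqrt{2}\kappa_0/\sigma$ for differentiable kernels, Corollary 4.36 of Steinwart and Christmann, which is the same ingredient you invoke via smoothness of $K_{\bm d}$ in $\bm d$ and $J(f)\le s^2$), and (b) reading off $\sign(\hat d_t)$ from the sign of the derivative at $d_t=0$. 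The main divergence is your step (i): the paper never proves $\widehat{\bm d}\to\bm d^*$ and needs no local strong convexity. Instead it evaluates all derivatives at the fixed reference point $d_t=0,\ d_{t'}=d^*_{t'}$, and transfers from $f^*$ to $\hat f$ by bounding $\bigl|\partial_{d_t}\{\mathbb{E}_{\mathcal{Q}}[\ell(-f^*)]-\mathbb{E}_{\mathcal{Q}}[\ell(-\hat f)]\}\bigr|$ with the excess-risk rate of Theorem \ref{thm:estmationerror} plus an approximation error $D_s$ that vanishes by the sparse-representation hypothesis; your route buys a cleaner KKT argument at the cost of an extra assumption (local strong convexity of a non-convex objective) that the theorem statement does not grant you. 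Your two flagged obstacles are real: the paper's own proof also does not rule out spurious stationary points of (\ref{eq:KFSprob}) (it silently identifies the output of Algorithm \ref{alg1} with the empirical minimizer of (\ref{eq:empoptfk})), and for unimportant features the population derivative is only assumed non-negative, so the perturbed empirical derivative need not be non-negative; the paper resolves neither issue explicitly and simply concludes from the sign assumption, whereas you at least propose the $\ell_1$ constraint as a tie-breaker. So your proposal is a faithful, and in places more honest, version of the paper's argument, but like the paper it leaves the boundary case and the non-convexity unresolved rather than proved.
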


Under the assumption for the partial derivatives, the optimization procedure in (\ref{eq:empoptfk}) will lead to a solution where the weight $\hat d_t>0$ for important features and $=0$ for unimportant features, respectively, for a large enough sample. Similar assumptions were used in \citet{fan2004nonconcave} and \citet{fan2010selective}.

\section{Numerical studies}\label{sec:numerical}
For the GPS methods, we use \texttt{cvxopt} or \texttt{scipy} in Python to solve the convex optimization problems involved. For competing methods like kernel density estimation (KDE) \citep{hechtlinger2018cautious}, OCSVM \citep{scholkopf2000support} and BSVM \citep{steinwart2005classification, du2014analysis, du2015convex}, we use their implementations in the  \texttt{scikit-learn} library. For the BCOPS-RF method \citep{guan2019prediction} involving Random Forest, we implemented it based on the \texttt{ensemble.RandomForestClassifier} function in the  \texttt{scikit-learn} library. 
We report the coverage rate 
${\left|j\in\mathcal{G}_{te}:Y_j=k~\&~Y_j\in \phi(\bm x_j)\right|}/\left|j\in\mathcal{G}_{te}:Y_j=k\right|$, the average cardinality of the decision rule $\frac{1}{m}\sum_{j\in\mathcal{G}_{te}}\left|\phi(\bm x_j)\right|$, the average (conditional) cardinality given non-outliers $\sum_{j\in\mathcal{G}_{te}}(\left|\phi(\bm x_j)\right|\cdot\mathbf{1}\{Y_j\neq \text{Outlier}\})/\sum_{j\in\mathcal{G}_{te}}\mathbf{1}\{Y_j\neq \text{Outlier}\}$, and the outlier detection rate $\sum_{j\in\mathcal{G}_{te}}\mathbf{1}\{Y_j= \text{Outlier}~\&~|\phi(\bm x_j)| =0\}/\sum_{j\in\mathcal{G}_{te}}\mathbf{1}\{Y_j= \text{Outlier}\}$. We report the average of these metrics  after 200 replications on another subset of the test data not used in model training.

\begin{figure}[!h]
\begin{minipage}[h]{0.32\textwidth}
\centering
\includegraphics[scale = 0.25]{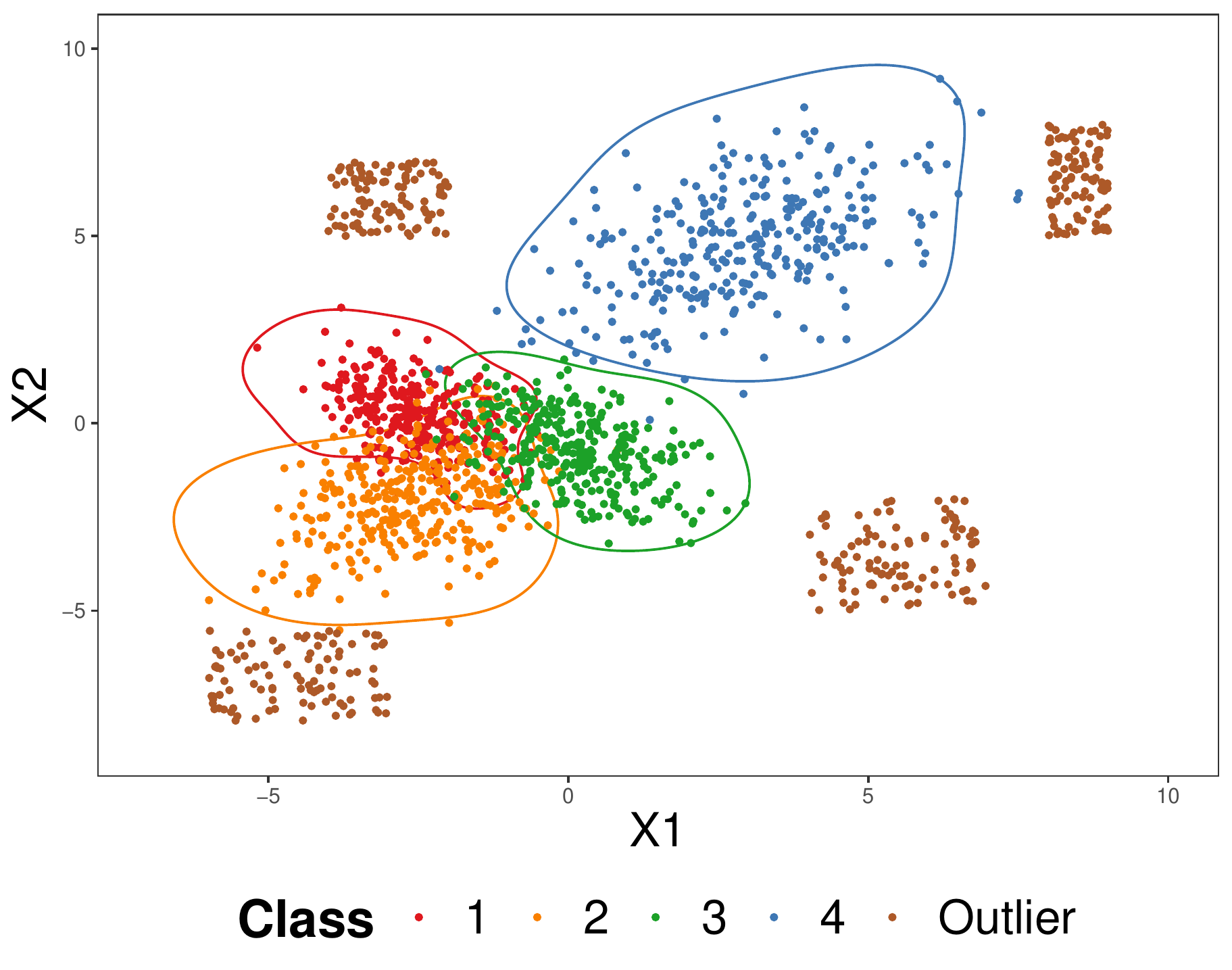}
\end{minipage}\hspace{1pt}
\begin{minipage}[h]{0.32\textwidth}
\centering
\includegraphics[scale = 0.25]{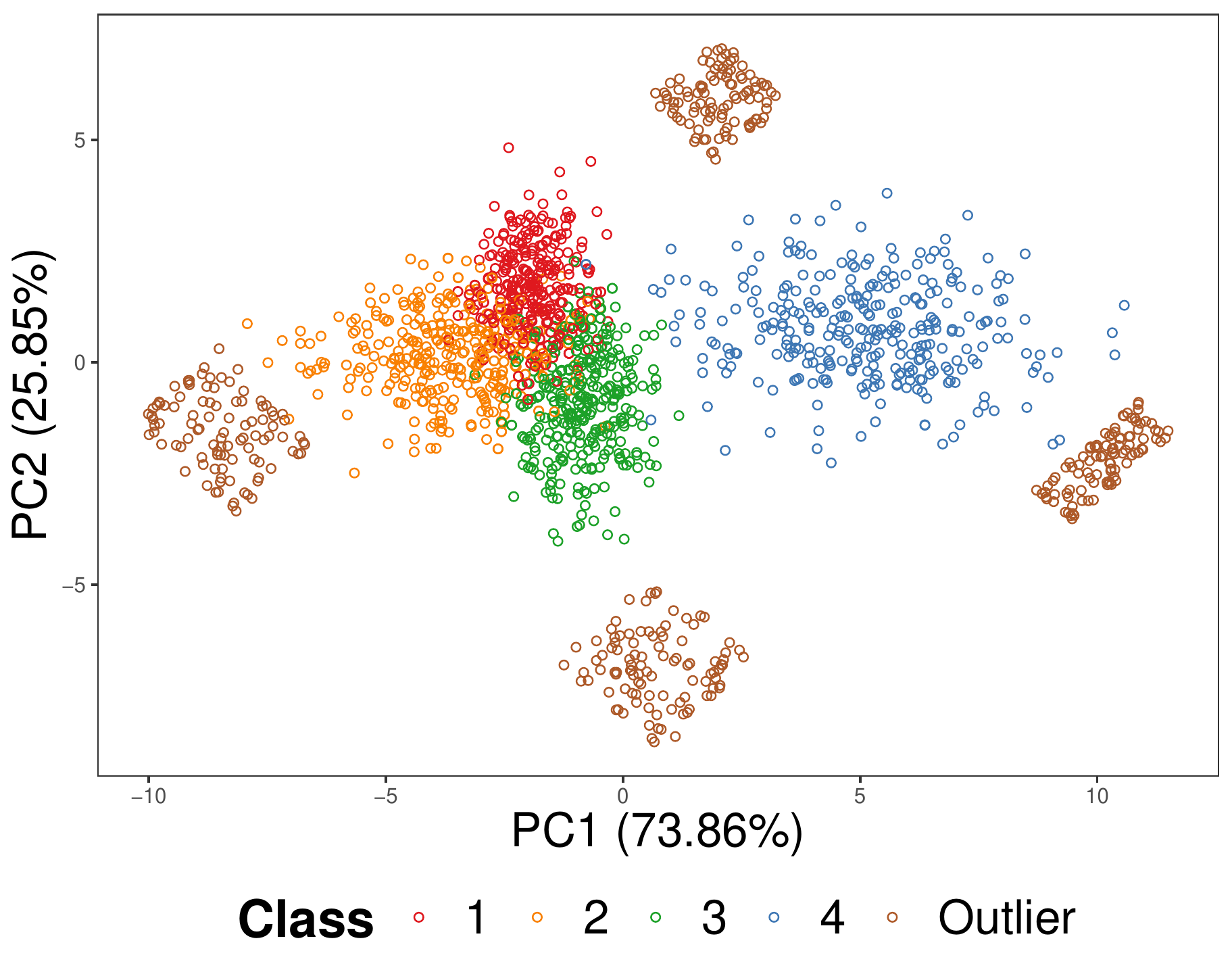}
\end{minipage}\hspace{1pt}
\begin{minipage}[h]{0.32\textwidth}
\centering
\includegraphics[scale = 0.25]{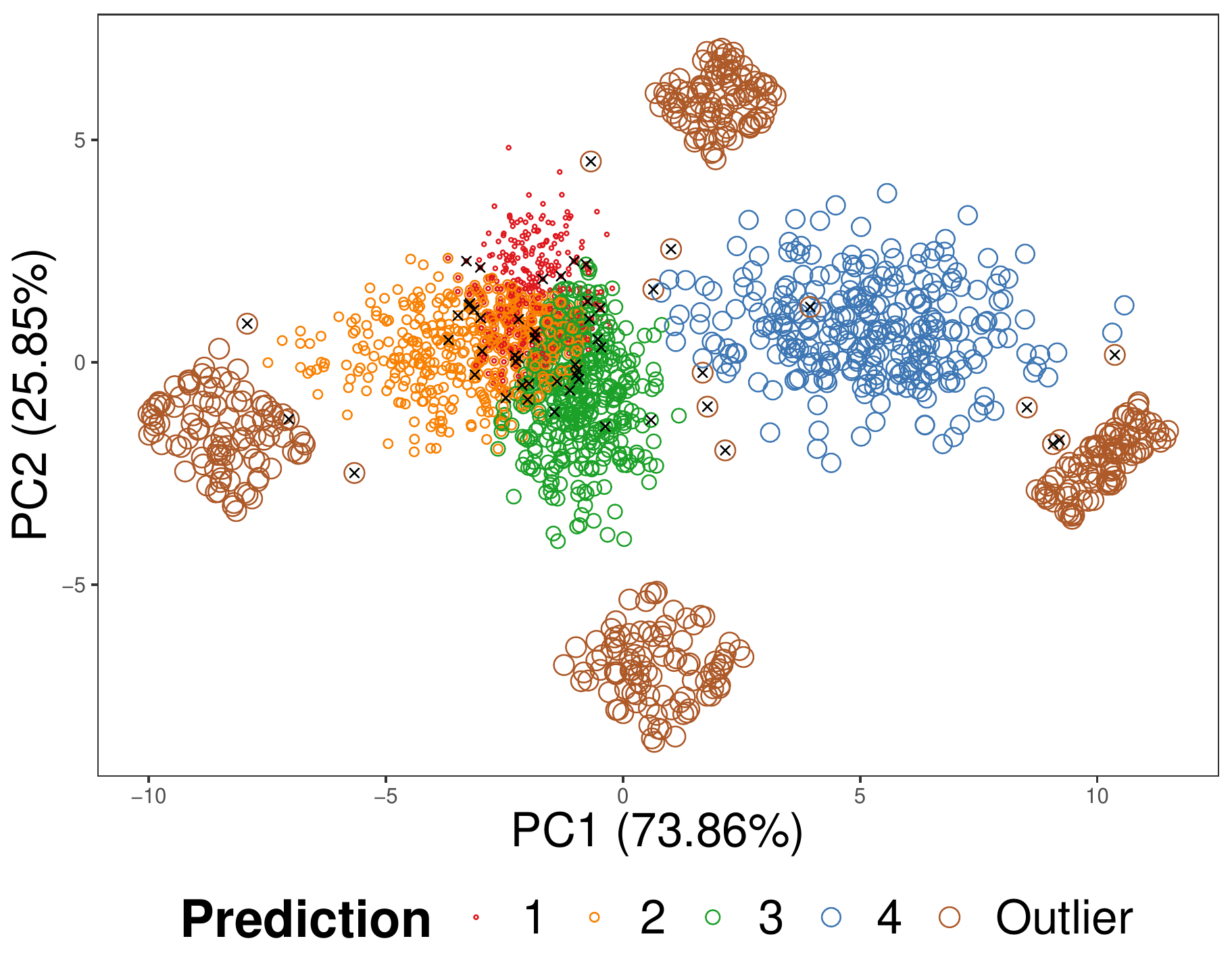}
\end{minipage}
\vskip -0.in
\caption{Example 1 and results of GPS classification. Left panel: The scatter plot for the first two dimensions. Colored contours are boundaries of acceptance regions for each of the four classes with $\gamma=5\%$. Middle panel: The scatter plot of the first two principle components for the test data. Right Panel: Same as the middle panel with the color of the circles indicating the predicted set of classes. The radius of the circles differ for different classes to allow the visualization of those points which are classified into multiple classes. Points whose true labels are not contained in the prediction set are labeled as black crosses. Points with empty prediction sets are marked as brown, that is, the outlier class.}
\label{fig:ex1countour}\vskip -0.5in
\end{figure}

\subsection{Simulations}\label{simulation}
\textbf{Example 1:} We first generate data points from the outlier class which is a mixture of four uniform distributions (with equal weights) on four rectangle regions as shown in the left panel of \cref{fig:ex1countour} (see the \cref{app:simdt}). Then we generate data from four multivariate normal classes ($k=1,\dots,4$) where $\bm X \mid Y=k\sim\mathcal{N}(\bm \mu_k, \Sigma_k)$, $\bm \mu_k=r_k(\cos\theta_k, \sin\theta_k)^\top$, $r_k\sim \text{Uniform}(0,6)$, $\theta_k\sim \text{Uniform}(0, 2\pi)$ and $\Sigma_k^{1/2}=\text{diag}(\sigma_k, \sigma_k) + \varepsilon_k$, where $\sigma_k\sim\text{Uniform}(0.8, 1.2)$ and $\varepsilon_k\sim\text{Uniform}(-0.5, 0.5)$. After generating the above two-dimensional data, we augment them with eight independent noise variables which are normal distributed with mean 0 and standard deviation 0.1.

\begin{table}[!b]
 \setlength{\abovecaptionskip}{-5pt} 
  \caption{Average performance metrics for Example 1}\label{tab:ex1AvgPerm}
  \vskip 0.1in
  \centering
  \begin{adjustbox}{max width=1.\textwidth}
  \begin{tabular}{cccccccc}
  \toprule
    &   & KDE & OCSVM & BSVM & BCOPS-RF & GPS & GPSKFS\\
  \midrule
   & Class 1 & \makecell[c]{0.963432\\(0.001161)} & \makecell[c]{0.964854\\(0.000852)} & \makecell[c]{0.962592\\(0.001013)} & \makecell[c]{0.9601\\(0.000926)} & \makecell[c]{0.959798\\(0.001155)} & \makecell[c]{0.954419\\(0.001283)}\\
  
   & Class 2 & \makecell[c]{0.95531\\(0.001139)} & \makecell[c]{0.954323\\(0.001054)} & \makecell[c]{0.966999\\(0.000874)} & \makecell[c]{0.966385\\(0.000941)} & \makecell[c]{0.957702\\(0.000931)} & \makecell[c]{0.953851\\(0.001021)}\\
  
   & Class 3 & \makecell[c]{0.95395\\(0.001079)} & \makecell[c]{0.963451\\(0.000816)} & \makecell[c]{0.960052\\(0.000807)} & \makecell[c]{0.95673\\(0.0011)} & \makecell[c]{0.959188\\(0.000976)} & \makecell[c]{0.954885\\(0.001104)}\\
  
  \multirow[t]{-4}{*}{\centering\arraybackslash \makecell[c]{Cvg.\\ rate}} & Class 4 & \makecell[c]{0.952608\\(0.001166)} & \makecell[c]{0.961199\\(0.000866)} & \makecell[c]{0.961176\\(0.001049)} & \makecell[c]{0.962935\\(0.001001)} & \makecell[c]{0.956328\\(0.001053)} & \makecell[c]{0.952011\\(0.001275)}\\
  \cmidrule{1-8}
  \multicolumn{2}{c}{Card.} & \makecell[c]{1.229944\\(0.002731)} & \makecell[c]{1.180821\\(0.002715)} & \makecell[c]{1.024036\\(0.006351)} & \makecell[c]{1.072269\\(0.003577)} & \makecell[c]{0.97462\\(0.001868)} & \makecell[c]{0.967706\\(0.002969)}\\
  
   \multicolumn{2}{c}{Cond. Card.} & \makecell[c]{1.620047\\(0.003604)} & \makecell[c]{1.562107\\(0.003602)} & \makecell[c]{1.334975\\(0.005168)} & \makecell[c]{1.341478\\(0.002621)} & \makecell[c]{1.288988\\(0.002285)} & \makecell[c]{1.277959\\(0.003488)}\\
  \cmidrule{1-8}
  \multicolumn{2}{c}{Detection rate}  & \makecell[c]{0.976643\\(0.001415)} & \makecell[c]{0.998787\\(0.000234)} & \makecell[c]{0.938637\\(0.010971)} & \makecell[c]{0.767202\\(0.008296)} & \makecell[c]{0.998224\\(0.000381)} & \makecell[c]{0.991713\\(0.003438)}\\
  \bottomrule
  \end{tabular}
  \end{adjustbox}
\end{table}

In \cref{fig:ex1countour}, the set-valued classifier is constructed based on first two dimensional data in left panel. The contours display the boundaries of acceptance regions for 4 classes. We can see the outlier class is successfully ruled out from those acceptance regions. For the middle panel, we project all dimensions onto the first two principle components and distinguish them by their corresponding true labels. Then predictions and detections returned by GPS are visualized by circles with different radius in the right panel. Circles centering at the same observation but with different radius show that the cardinality of prediction set for that observation is more than 1. This means those observations are similar to others and are difficult to be confidently assigned by a single label. The black crosses denote those observations incorrectly detected as outliers or classified as other 3 classes. From the right panel, we can see that this type of decisions are more likely to appear on the tail of class distributions.

We set significance level $\gamma=5\%$ in this simulation. 
\cref{tab:ex1AvgPerm} shows that all methods can control the coverage rate to be at least 95\% for all classes on average.
From the last three rows, we see that the proposed GPS and GPSKFS discover the outlier data with more than $99\%$ probability, being on par with OCSVM and outperforming all other methods. Moreover, the average cardinality and the average conditional cardinality of the prediction sets returned by GPS and GPSKFS are the smallest among all methods.

\textbf{Example 2:} This example is similar to Example 3 in \citet{wang2018learning}. We first generate radius-angle pairs $(R, \theta)$, where $\theta\sim\mbox{Uniform}(0, 2\pi)$. $R\mid Y=1 \sim \mbox{Uniform}(0, 5)$, $R\mid Y=2 \sim \mbox{Uniform}(4, 9)$ and $R\mid Y=3 \sim \mbox{Uniform}(8, 13)$. For the outlier class in the test data, $R\mid \mbox{outlier} \sim \mbox{Uniform}(15, 20)$. Then we define a 2-dimensional data vector $(R\cdot\cos\theta, R\cdot\sin\theta)$. Finally, we add 98 independent normal noise variables with mean 0 and standard deviation 1.

Similar to the data visualization in Example 1, we display the boundaries of the acceptance regions for Example 2 in the left panel of Figure \ref{fig:ex2countour}. We show the first two principal components for the test data, with the true labels in the middle panel and with the predicted label sets in the right panel.

\begin{figure}[!t]
\begin{minipage}[h]{0.32\textwidth}
\adjustimage{scale = 0.25, center}{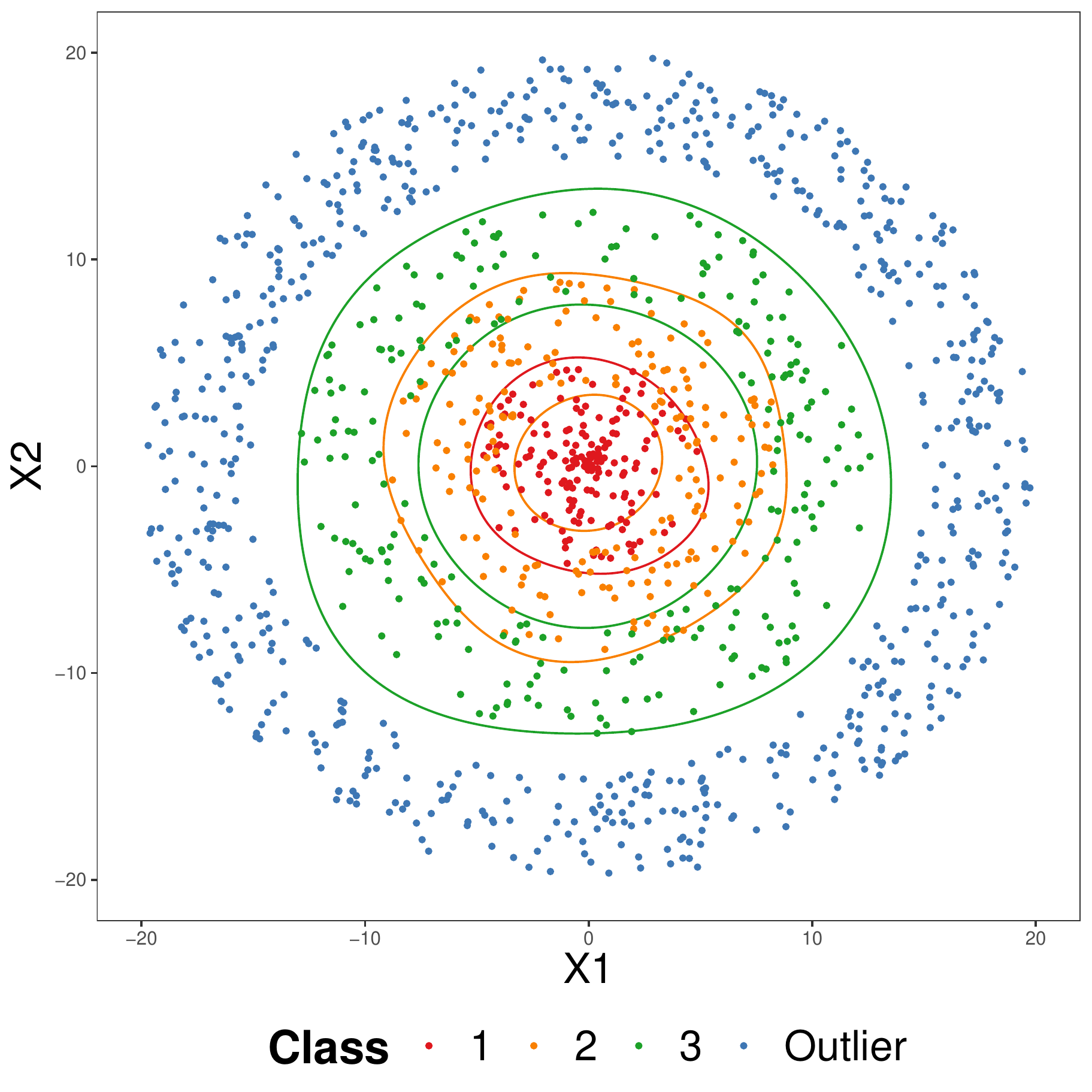}
\end{minipage}\hspace{1pt}
\begin{minipage}[h]{0.32\textwidth}
\adjustimage{scale = 0.25,center}{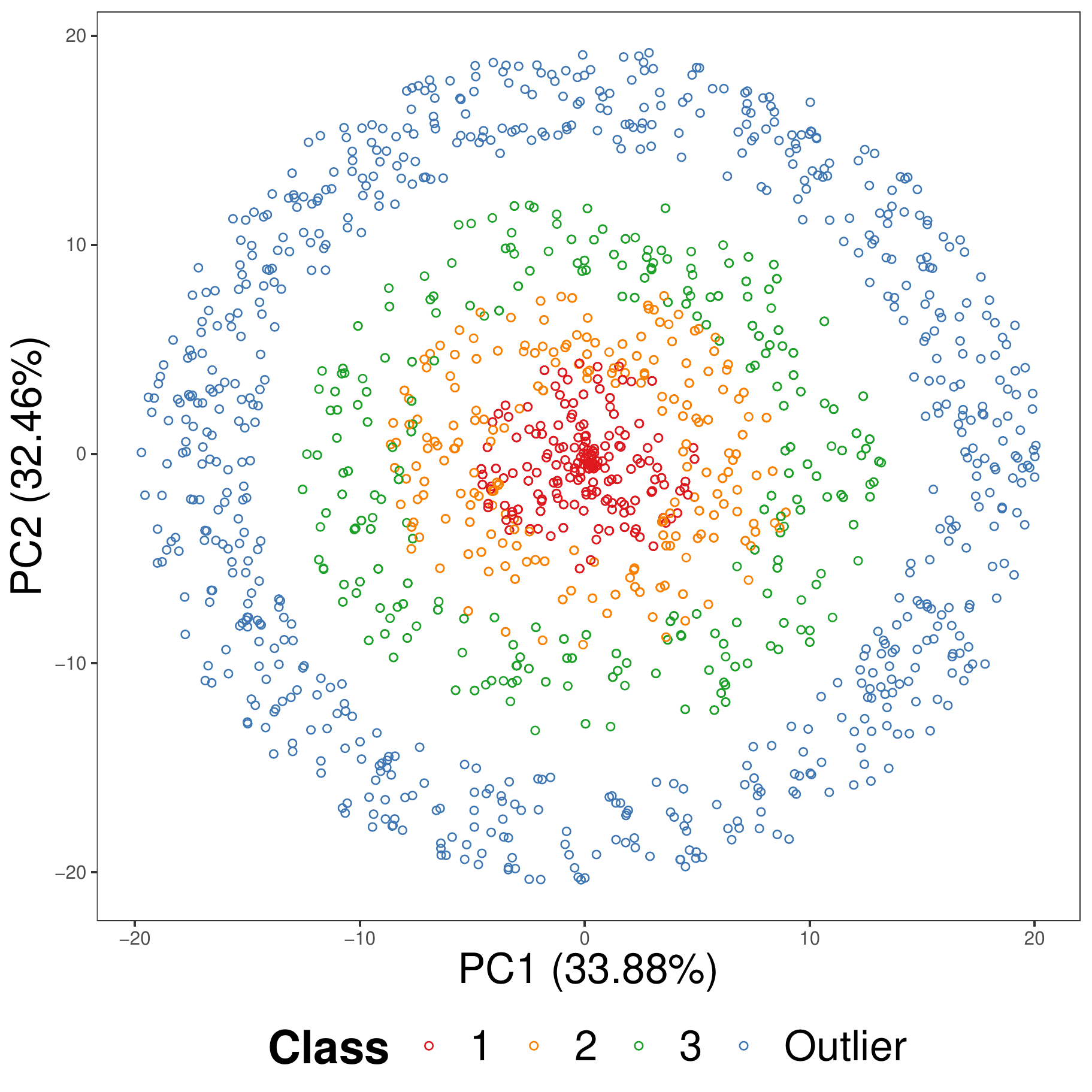}
\end{minipage}\hspace{1pt}
\begin{minipage}[h]{0.32\textwidth}
\adjustimage{scale = 0.25,center}{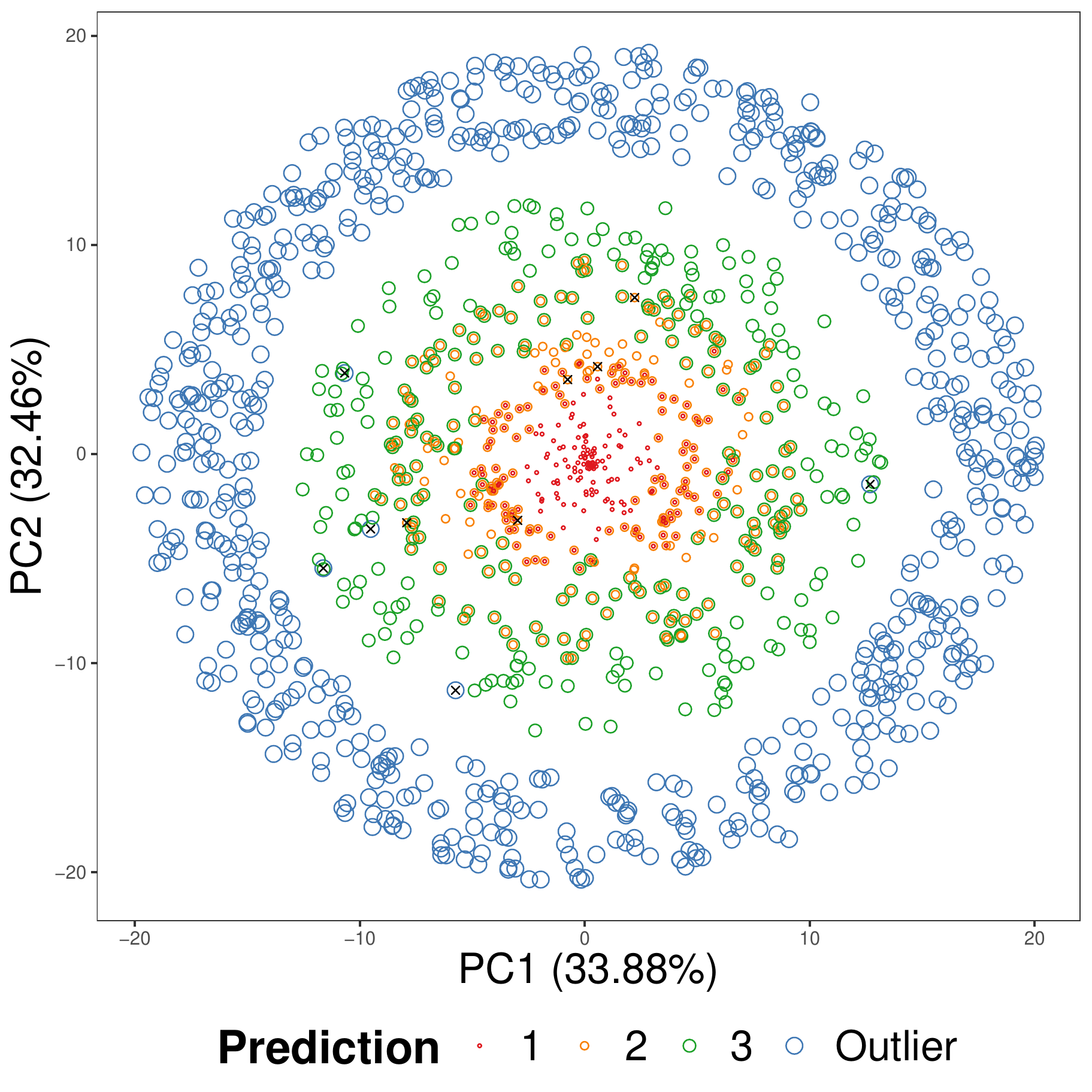}
\end{minipage}
\caption{Example 2 and results of GPS classification. Left panel: The scatter plot for the first two dimensions. Colored contours are boundaries of acceptance regions for each of the four classes with $\gamma=1\%$. Middle panel: The scatter plot of the first two principal components for the test data. Right Panel: Same as the middle panel with the color of the circles indicating the predicted set of classes. The radius of the circles differ for different classes to allow the visualization of those points which are classified into multiple classes. Points whose true labels are not contained in the prediction set are labeled as black crosses. Points with empty prediction sets are marked as blue, the outlier class.}
\label{fig:ex2countour}
\end{figure}

\begin{table}[!h]
  \caption{Average performance metrics for Example 2}
  \label{tab:ex2AvgPerm}
  \vskip 0.1in
  \centering
  \begin{adjustbox}{max width=1.\textwidth}
  \begin{tabular}{cccccccc}
  \toprule
    &   & KDE & OCSVM & BSVM & BCOPS-RF & GPS & GPSKFS\\
  \midrule
   & Class 1 & \makecell[c]{0.987115\\(0.000807)} & \makecell[c]{0.976428\\(0.000959)} & \makecell[c]{0.989296\\(0.000858)} & \makecell[c]{0.986973\\(9e-04)} & \makecell[c]{0.981963\\(0.000926)} & \makecell[c]{0.979427\\(0.000936)}\\
  
   & Class 2 & \makecell[c]{0.990974\\(0.000561)} & \makecell[c]{0.993547\\(0.000353)} & \makecell[c]{0.992637\\(0.000573)} & \makecell[c]{0.994336\\(0.000562)} & \makecell[c]{0.992854\\(0.000419)} & \makecell[c]{0.994052\\(0.000551)}\\
  
  \multirow[t]{-3}{*}{\centering\arraybackslash \makecell[c]{Cvg.\\ rate}} & Class 3 & \makecell[c]{0.988943\\(0.000668)} & \makecell[c]{0.988907\\(0.000546)} & \makecell[c]{0.987811\\(0.000774)} & \makecell[c]{0.993519\\(0.000533)} & \makecell[c]{0.983814\\(0.000703)} & \makecell[c]{0.989801\\(0.00061)}\\
  \cmidrule{1-8}
  \multicolumn{2}{c}{Card.} & \makecell[c]{1.742601\\(0.003386)} & \makecell[c]{1.080905\\(0.002137)} & \makecell[c]{1.163262\\(0.00892)} & \makecell[c]{1.794714\\(0.011305)} & \makecell[c]{1.042262\\(0.002198)} & \makecell[c]{0.623107\\(0.001649)}\\
  
   \multicolumn{2}{c}{Cond. Cardi.} & \makecell[c]{2.654633\\(0.002671)} & \makecell[c]{2.274483\\(0.002598)} & \makecell[c]{2.271912\\(0.006102)} & \makecell[c]{2.483697\\(0.007604)} & \makecell[c]{2.179741\\(0.002661)} & \makecell[c]{1.31847\\(0.002458)}\\
  \cmidrule{1-8}
  \multicolumn{2}{c}{Detection rate} & \makecell[c]{0.333359\\(0.002913)} & \makecell[c]{0.988376\\(0.00045)} & \makecell[c]{0.832284\\(0.018087)} & \makecell[c]{0.263899\\(0.00695)} & \makecell[c]{0.976481\\(0.001266)} & \makecell[c]{1\\(0)}\\
  \bottomrule
  \end{tabular}
  \end{adjustbox}
\end{table}

For this example, we set the significance level $\gamma=1\%$. \cref{tab:ex2AvgPerm} shows that almost all methods can control the coverage rate to be close to 99\% for all classes on average. The last three rows show the efficiency and outlier detection performance of these methods. Both GPS and GPSKFS have significantly smaller cardinalities (although GPS has a slightly smaller detection rate than OCSVM) compared to all other methods, while at the same time, GPSKFS has the striking outlier detection rate. The GPSKFS significantly improves the performance of the regular GPS method.

\begin{figure}[!t]
  \begin{minipage}[h]{1\textwidth}
    \centering
    \includegraphics[width=1\linewidth]{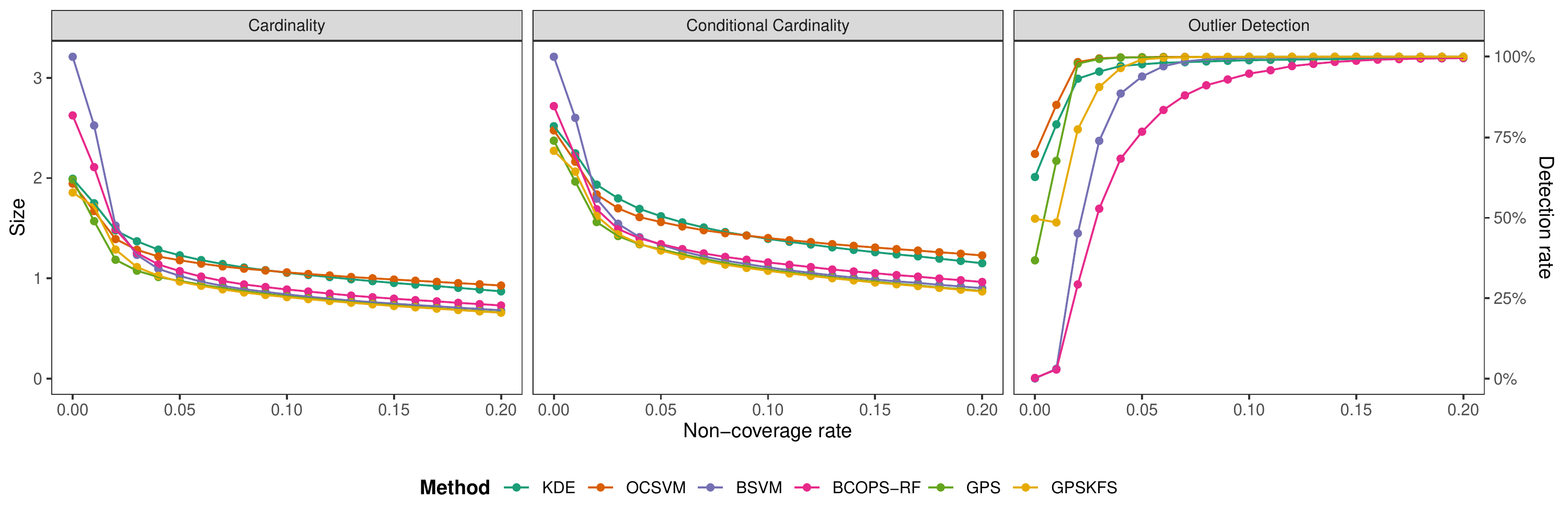}
  \end{minipage}
  \begin{minipage}[h]{1\textwidth}
      \centering
      \includegraphics[width=1\linewidth]{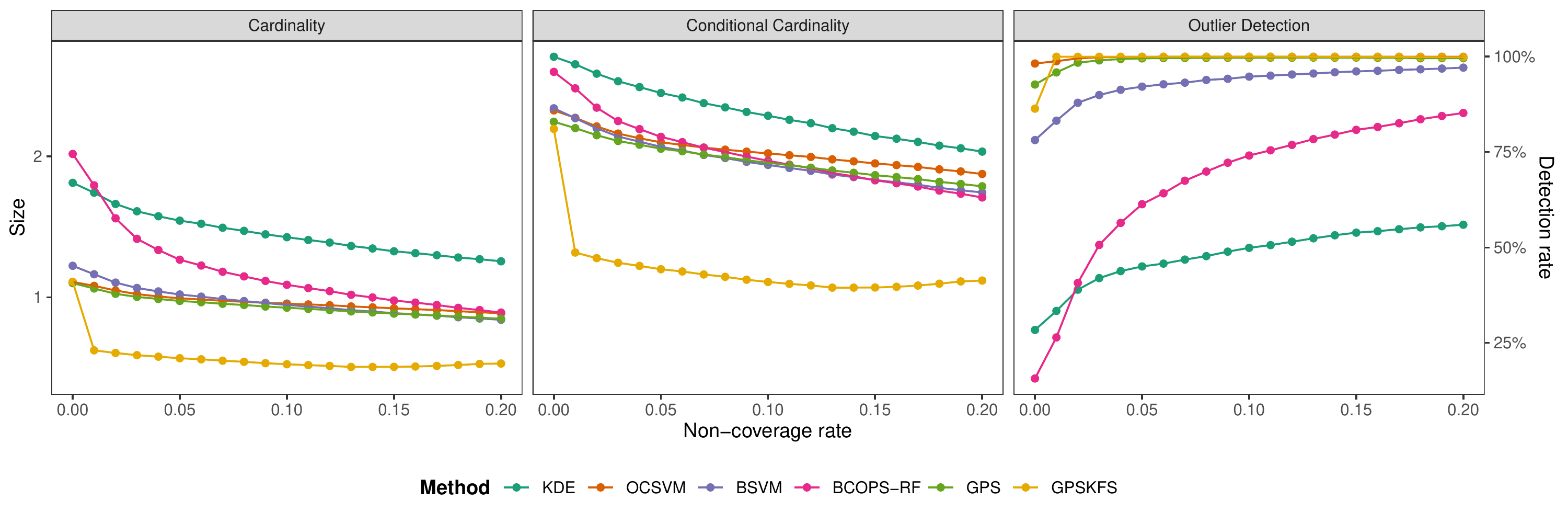}
  \end{minipage}
  \vskip -0.05in
  \caption{Scree plot for different methods in terms of cardinalities and outlier detection rate with different non-coverage rates. The top and bottom panels are for Example 1 and Example 2, respectively. 
  }
  \label{fig:screeplot}
\end{figure}

Note that one should not evaluate rows in \cref{tab:ex1AvgPerm} and \cref{tab:ex2AvgPerm} separately as there exist  trade-offs among non-coverage rate, the average cardinality, and the outlier detection rate, further demonstrated in Figure \ref{fig:screeplot} (Example 1 in the top panel and Example 2 in the bottom). In the two left panels, with increasing non-coverage rates, the average cardinality and the average of conditional cardinality both decrease since the acceptance regions are becoming smaller. Shrinking acceptance regions also allow outlying points to be detected more easily, shown in the right panel. Figure \ref{fig:screeplot} provides a way to choose the non-coverage rate $\gamma$ based on the data. For example, in Example 1, if one cares about the efficiency and the accuracy, then one can identify an elbow in the first two plots (such as $\gamma=0.01$ or $0.02$) which corresponds to $\gamma$ with low cardinality and reasonably low $\gamma$. One should also refer to the third plot to see if the outlier detection performance is acceptable. In Example 2, GPSKFS with a small $\gamma$ such as $0.01$ would be sufficient to obtain an efficient and accurate classifier with a very high outlier detection rate. In practice, one can use a tuning data set or cross-validation to produce these plots.

\begin{table}[!b]
  \caption{Average performance metrics for Zipcode data}\label{tab:tbperfzip}
  \vskip 0.1in
  \centering
  \begin{adjustbox}{max width=1.\textwidth}
  \begin{tabular}{cccccccc}
  \toprule
    &   & KDE & OCSVM & BSVM & BCOPS-RF & GPS & GPSKFS\\
  \midrule
   & Class 1 & \makecell[c]{0.991117\\(0.000361)} & \makecell[c]{0.990761\\(0.000319)} & \makecell[c]{0.993457\\(0.000357)} & \makecell[c]{0.995633\\(0.000257)} & \makecell[c]{0.990907\\(0.000376)} & \makecell[c]{0.986328\\(0.000393)}\\
  
   & Class 2 & \makecell[c]{0.991915\\(0.000543)} & \makecell[c]{0.989583\\(0.000436)} & \makecell[c]{0.990241\\(0.000481)} & \makecell[c]{0.986156\\(0.000518)} & \makecell[c]{0.987301\\(0.000552)} & \makecell[c]{0.982399\\(0.000588)}\\
  
   & Class 3 & \makecell[c]{0.987302\\(0.000664)} & \makecell[c]{0.982153\\(0.000782)} & \makecell[c]{0.985098\\(0.000736)} & \makecell[c]{0.984648\\(0.000698)} & \makecell[c]{0.981102\\(0.000788)} & \makecell[c]{0.969517\\(0.000833)}\\
  
  \multirow[t]{-4}{*}{\centering\arraybackslash \makecell[c]{Cvg.\\ rate}} & Class 4 & \makecell[c]{0.994114\\(0.000258)} & \makecell[c]{0.99707\\(0.000222)} & \makecell[c]{0.992739\\(0.000312)} & \makecell[c]{0.992456\\(0.000408)} & \makecell[c]{0.990116\\(0.000366)} & \makecell[c]{0.985439\\(0.000515)}\\
  \cmidrule{1-8}
  \multicolumn{2}{c}{Card.}& \makecell[c]{2.71731\\(0.01002)} & \makecell[c]{2.332223\\(0.016826)} & \makecell[c]{0.682202\\(0.012149)} & \makecell[c]{0.899256\\(0.0103)} & \makecell[c]{0.621457\\(0.009297)} & \makecell[c]{0.520096\\(0.005579)}\\
  
   \multicolumn{2}{c}{Cond. Card.} & \makecell[c]{2.694657\\(0.010393)} & \makecell[c]{2.41925\\(0.014163)} & \makecell[c]{1.368261\\(0.013598)} & \makecell[c]{1.473345\\(0.009692)} & \makecell[c]{1.262108\\(0.00936)} & \makecell[c]{1.14894\\(0.005256)}\\
  \cmidrule{1-8}
  \multicolumn{2}{c}{Detection rate}  & \makecell[c]{0.032493\\(0.000744)} & \makecell[c]{0.082452\\(0.002549)} & \makecell[c]{0.628362\\(0.007725)} & \makecell[c]{0.469479\\(0.005752)} & \makecell[c]{0.646673\\(0.007275)} & \makecell[c]{0.716897\\(0.004804)}\\
  \bottomrule
  \end{tabular}
  \end{adjustbox}
\end{table}

\subsection{Real data analysis}\label{realdata}
\textbf{Zipcode:} The first real example we consider is a hand-written zipcode dataset which consists of training data with 7291 observations and 256 features as well as test data with 2007 points. We first merge them together and treat labels 0, 6, 8, 9 as the normal classes and the remaining labels as the outlier. To generate our training data, we randomly sample from each of the normal classes with subsample sizes 550, 580, 495, and 574, respectively. The remaining data points from the normal classes, and all points in the outlier class, will form the test data. Following \citet{wang2018learning}, we set the  non-coverage probability as $\gamma=0.01$ in this example.

\noindent
\textbf{Phoneme (Speech Recognition):} This dataset 
with 4509 data observations and 256 features is formed by selecting five phonemes for classification based on digitized speech.  The phonemes are transcribed as follows: ``sh" as in ``she", ``dcl" as in ``dark", ``iy" as the vowel in ``she", ``aa" as the vowel in ``dark", and ``ao" as the first vowel in ``water". We treat class ``sh" as the outlier in test data and sample the other four classes with size around 500 for training data. Here the non-coverage probability is also specified as 0.01 for each class.

\begin{table}[!h]
  \caption{Average performance metrics for Phoneme data}
  \label{tab:tbperfpho}
  \vskip 0.1in
  \centering
  \begin{adjustbox}{max width=1.\textwidth}
  \begin{tabular}{cccccccc}
  \toprule
    &   & KDE & OCSVM & BSVM & BCOPS-RF & GPS & GPSKFS\\
  \midrule
   & Class 1 & \makecell[c]{0.980271\\(0.000719)} & \makecell[c]{0.975932\\(0.000529)} & \makecell[c]{0.992586\\(0.000548)} & \makecell[c]{0.990774\\(0.000483)} & \makecell[c]{0.975624\\(0.000866)} & \makecell[c]{0.98067\\(0.000907)}\\
  
   & Class 2 & \makecell[c]{0.995219\\(0.000397)} & \makecell[c]{0.996666\\(0.000183)} & \makecell[c]{0.99183\\(0.000439)} & \makecell[c]{0.992273\\(0.000398)} & \makecell[c]{0.989256\\(0.000509)} & \makecell[c]{0.983437\\(0.000601)}\\
  
   & Class 3 & \makecell[c]{0.99362\\(0.000452)} & \makecell[c]{0.988917\\(0.000329)} & \makecell[c]{0.990333\\(0.000538)} & \makecell[c]{0.993262\\(0.000521)} & \makecell[c]{0.988117\\(0.000622)} & \makecell[c]{0.983034\\(0.000666)}\\
  
  \multirow[t]{-4}{*}{\centering\arraybackslash \makecell[c]{Cvg.\\ rate}} & Class 4 & \makecell[c]{0.990006\\(0.000504)} & \makecell[c]{0.995385\\(0.000329)} & \makecell[c]{0.994372\\(0.000394)} & \makecell[c]{0.993214\\(0.000363)} & \makecell[c]{0.993802\\(0.000411)} & \makecell[c]{0.985954\\(0.000633)}\\
  \cmidrule{1-8}
  \multicolumn{2}{c}{Card.}& \makecell[c]{1.910706\\(0.005553)} & \makecell[c]{1.4308\\(0.006762)} & \makecell[c]{1.063633\\(0.014423)} & \makecell[c]{1.289214\\(0.016904)} & \makecell[c]{0.935682\\(0.003749)} & \makecell[c]{0.914176\\(0.005313)}\\
  
  \multicolumn{2}{c}{Cond. Card.} & \makecell[c]{2.326539\\(0.004782)} & \makecell[c]{1.967869\\(0.007403)} & \makecell[c]{1.599113\\(0.014544)} & \makecell[c]{1.841759\\(0.017022)} & \makecell[c]{1.459293\\(0.00508)} & \makecell[c]{1.411896\\(0.004725)}\\
  \cmidrule{1-8}
  \multicolumn{2}{c}{Detection. rate} & \makecell[c]{0.250608\\(0.004657)} & \makecell[c]{0.530173\\(0.006858)} & \makecell[c]{0.881095\\(0.014379)} & \makecell[c]{0.684472\\(0.017594)} & \makecell[c]{0.981983\\(0.001574)} & \makecell[c]{0.957807\\(0.009124)}\\
  \bottomrule
  \end{tabular}
  \end{adjustbox}
\end{table}

\cref{tab:tbperfzip} and \cref{tab:tbperfpho} display average performances over 200 replications for the Zipcode and Phoneme data, respectively. From these two tables, we see that the average class-specific coverage rate is close to the desired value. We see KDE and OCSVM not only have the worst outlier detection performance but also return prediction sets with much ambiguity (large cardinality). Additionally, our proposed GPS methods significantly outperform all other methods. GPSKFS returns smaller prediction sets than GPS. In \cref{tab:tbperfpho}, the outlier detection rate from GPSKFS (with feature selection) is smaller than GPS on average, which is due to the extreme outlier rejection performance on several runs (see boxplots with cardinality = 0 in the right panel of \cref{fig:ppsdc}). In general, margin-based classifiers (OCSVM, BSVM, GPS, GPSKFS) outperform the plug-in methods (KDE, BCOPS-RF).

\begin{figure}[!t]
  \adjustimage{scale = 0.55, center}{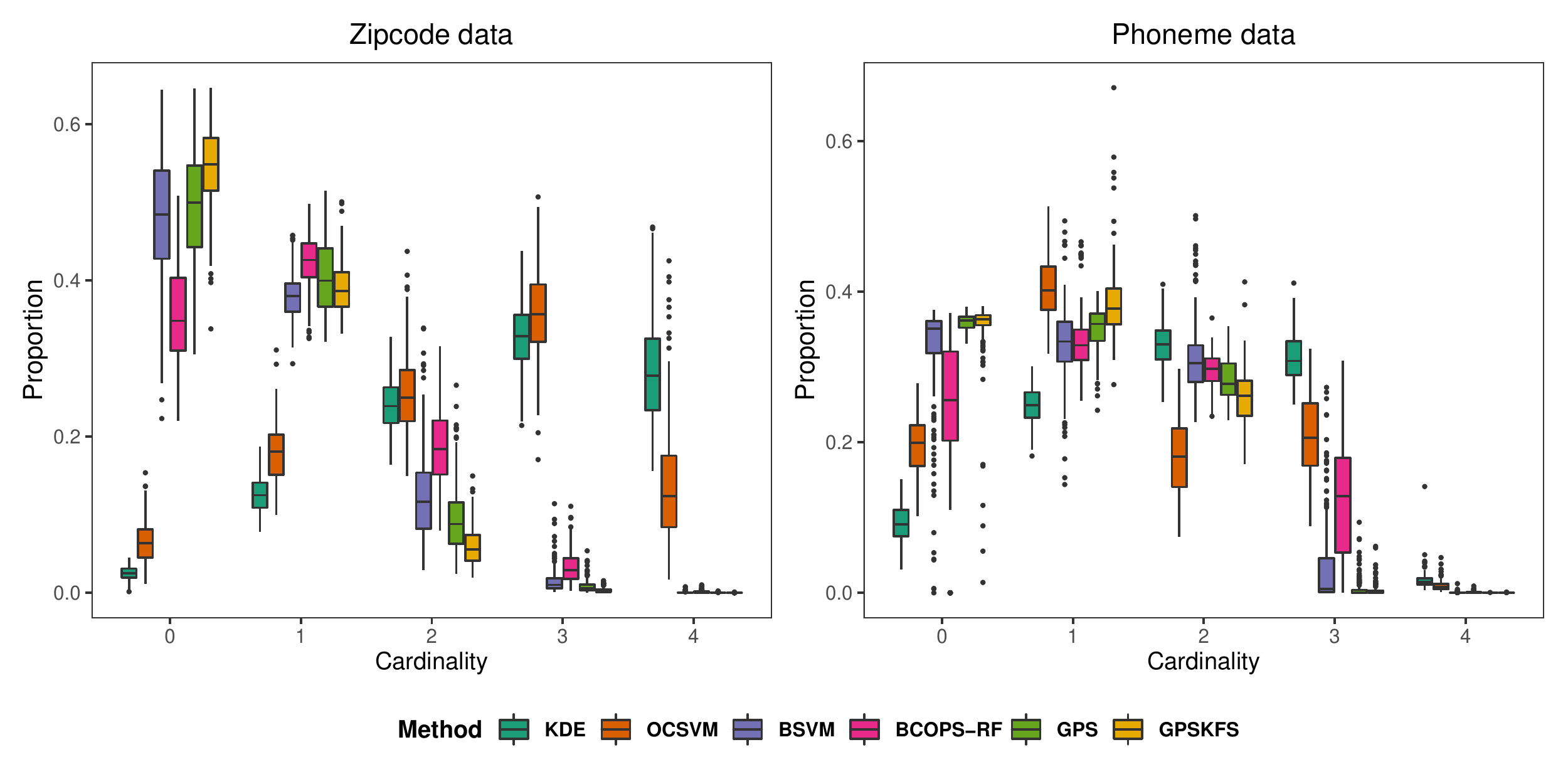}
  \vskip -0.15in
  \caption{Proportions of data points with a different prediction set conditional cardinality.}\label{fig:ppsdc}
\end{figure}

In \cref{fig:ppsdc}, we visualize the distributions of the different observed cardinalities returned by different models in the test data (not used in the training) for both real examples. Observations with zero cardinality are deemed outliers. The larger the cardinality is, the less informative the prediction set is. Generally, we can see KDE and OCSVM have a tendency to return larger and hence less informative prediction sets. In contrast, BSVM, GPS, and GPSKFS produce very few large prediction sets.

In summary, while we do not expect a single method to outperform all other methods in all metrics and situations, we have seen satisfactory results from the GPS methods.

\section{Conclusion}\label{sec:conclusion}

Motivated by the necessity of set-valued predictions with low ambiguity and an outlier detection capacity in critical domains (e.g. in public health), we propose the GPS methods to conduct set-valued multicategory classification with many classes. Our proposed methods are capable of detecting novel classes in the test data that have not appeared in the training data. The GPS methods have a feature selection property in the kernel learning context. We make use of the ``divide-and-conquer" strategy to break down a large-scale problem into many sub-problems, involving only two classes of data, namely an existing class $k$ and the subset of test data which may include a novel class. Because all the sub-problems can be solved in an \textit{embarrassingly parallel} way, the computational time can be greatly reduced compared to solving a large-scale optimization problem involving all classes.

Instead of using the plug-in methods (e.g. KDE, BCOPS-RF), we solve each sub-problem by directly estimating the acceptance region through a constrained optimization problem. This has led to good generalization performance shown in our numerical studies. The Kernel Feature Selection mechanism can allow the proposed set-valued classifier to have better performance in terms of efficiency and outlier detection for high-dimensional data. Both theoretical and numerical studies demonstrate the usefulness of the proposed methods.

A possible future research direction is to develop methods without the need of test data. This may be done by learning an initial acceptance region for class $k$, and then improving it using data not from class $k$. Instead of the usage of test sub-sample in our proposed method, there are other works using auxiliary data \citep{hendrycks2018deep, neal2018open}, e.g. outlier exposure data or generating synthetic data. However, we may fail to guarantee the key metric ambiguity in set-valued classification if using the aforementioned auxiliary data, although we have improved detection rate and controlled class-specific accuracy. Another interesting problem is to generalize our methods to online learning \citep{kivinen2001online,lu2016large, zhang2019incremental}.

\bibliographystyle{asa}
\bibliography{LARfMCwAD}


\clearpage

\appendix

\section{Numerical study details}
\subsection{Algorithm outline}\label{alg:outline}

\begin{algorithm}
    \caption{Optimization with Weighted Kernel Feature Selection}
    \label{alg1}
\begin{algorithmic}[1]
\Require Initialize $(\bm\alpha^{(0)}, \rho_k^{(0)})=\bm0, \bm d^{(0)}=\mathbf 1$
\While{$(\bm\alpha^{(t)}, \rho_k^{(t)}), \bm d^{(t)}$ not convergent}
    \State $(\bm\alpha^{(t)},\rho_k^{(t)})=$ minimizer of problem (\ref{eq:KFSprob2}) when fixing $\bm d=\bm d^{(t)}$
    \While{$\bm d^{\mbox{cv}}$ not convergent}
  
         \State $\bm d^{\mbox{cv}}=$ minimizer of problem (\ref{eq:KFSprob3}) when fixing $(\bm\alpha,\rho_k)=(\bm\alpha^{(t-1)},\rho_k^{(t-1)})$ 
         
         \parState{Compute a descent direction $\Delta \bm d=\bm d^{\mbox{cv}}-\bm d^{(t-1)}$. Conduct a line search to find $\nu$ such that $\bm d^{(t-1)}+\nu\Delta \bm d$ decreasing the objective function in problem (\ref{eq:KFSprob2}) when fixing $(\bm\alpha,\rho_k)=(\bm\alpha^{(t-1)},\rho_k^{(t-1)})$}
                  
         \State $\bm d^{(t-1)}=\bm d^{(t-1)}+\nu\Delta \bm d$ 
         \EndWhile
         
    \State $\bm d^{(t)}=\bm d^{\mbox{cv}}$
    
\EndWhile
\State \Return ($\bm\alpha^{(t)}, \rho_k^{(t)}, \bm d^{(t)}$), and $\hat f_k(\bm d \circ \cdot)$
\end{algorithmic}
\end{algorithm}
\subsection{Loss functions}\label{loss:func}
Proposition \ref{prop:differen} shows that we can obtain a local minimum of the objective in each iteration using Algorithm \ref{alg1} under a certain loss function.

\begin{proposition}[Proposition 1 in \citet{allen2013automatic}]
\label{prop:differen}
If the convex loss function in (\ref{eq:KFSprob2}) and (\ref{eq:KFSprob3}) is continuously differentiable with respect to $(\bm\alpha,\rho_k)$, the kernel function is convex or concave and is continuously differentiable with respect to $\bm d$, then the solution obtained from Algorithm \ref{alg1}  converges to a local minimizer.
\end{proposition}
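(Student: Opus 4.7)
The target is to show that Algorithm \ref{alg1}, which is a two-block alternating scheme on the objective of (\ref{eq:KFSprob2}) with an inner linearization-plus-line-search on the $\bm d$-block, produces iterates whose limit points are local minimizers. Let $F(\bm\alpha, \rho_k, \bm d)$ denote the objective in (\ref{eq:KFSprob2}) and let $\mathcal{D}=\{\bm d : \bm 0\preceq \bm d\preceq\mathbf 1\}$. The plan is a standard monotone-descent argument, but specialized to the surrogate-based inner loop. First I would observe that under the hypotheses $F$ is continuous on a set whose $\bm d$-block is compact and whose $(\bm\alpha,\rho_k)$-block is bounded along the iterates (via the coercive regularization arising from $\tfrac12\bm\alpha^\top \mathbf{K}_{\bm d}\bm\alpha$ combined with the nonnegative losses and $C_2\|\bm d\|_1$ term), so $F$ is bounded below along the sequence and every iterate sequence has a convergent subsequence.

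Next I would show the outer iterations are monotone nonincreasing. The $(\bm\alpha,\rho_k)$-step is exact minimization of a convex problem (because the kernel Gram matrix $\mathbf{K}_{\bm d^{(t)}}$ is PSD and the loss is convex), hence $F(\bm\alpha^{(t)},\rho_k^{(t)},\bm d^{(t)}) \le F(\bm\alpha^{(t-1)},\rho_k^{(t-1)},\bm d^{(t)})$. For the $\bm d$-step I would argue that the candidate $\bm d^{\mbox{cv}}$ obtained from (\ref{eq:KFSprob3}) yields a descent direction $\Delta \bm d = \bm d^{\mbox{cv}} - \bm d^{(t-1)}$ for the true objective evaluated at $(\bm\alpha^{(t)},\rho_k^{(t)},\cdot)$. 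The key identity is that the surrogate in (\ref{eq:KFSprob3}) is built from the first-order Taylor expansion of $\mathbf{K}_{\bm d}$ at $\bm d^\prime = \bm d^{(t-1)}$, so
\[
\nabla_{\bm d} F\bigl(\bm\alpha^{(t)},\rho_k^{(t)},\bm d^{(t-1)}\bigr) \;=\; \nabla_{\bm d} \widetilde F\bigl(\bm\alpha^{(t)},\rho_k^{(t)},\bm d^{(t-1)}\bigr),
\]
where $\widetilde F$ denotes the objective in (\ref{eq:KFSprob3}). By convexity of $\widetilde F$ in $\bm d$ and optimality of $\bm d^{\mbox{cv}}$, the directional derivative of $\widetilde F$ along $\Delta \bm d$ at $\bm d^{(t-1)}$ is nonpositive, and is zero only if $\bm d^{(t-1)}$ is already a minimizer of $\widetilde F$ (equivalently, a stationary point of $F$ in $\bm d$). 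The line search step then picks $\nu$ so that $F$ is strictly decreased whenever $\Delta\bm d$ is a nontrivial descent direction; feasibility is maintained by restricting $\nu$ to keep $\bm d^{(t-1)}+\nu\Delta\bm d\in\mathcal{D}$, which is possible since $\bm d^{\mbox{cv}}\in\mathcal{D}$ and $\mathcal{D}$ is convex.

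Combining the two block updates, the sequence $\{F(\bm\alpha^{(t)},\rho_k^{(t)},\bm d^{(t)})\}$ is monotone nonincreasing and bounded below, hence convergent. Using compactness of $\mathcal{D}$ and boundedness of the $(\bm\alpha,\rho_k)$-iterates, extract a subsequence converging to some limit $(\bm\alpha^\star,\rho_k^\star,\bm d^\star)$. By continuity of gradients, passing to the limit in the optimality conditions of each block yields: (i) $(\bm\alpha^\star,\rho_k^\star)$ solves the convex block problem for $\bm d^\star$, so it satisfies the KKT conditions there; and (ii) the directional derivative of $F(\bm\alpha^\star,\rho_k^\star,\cdot)$ along any feasible direction at $\bm d^\star$ is nonnegative (otherwise the line search would produce a strict decrease, contradicting the convergence of $F$). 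These together give the first-order necessary conditions for a local minimizer of $F$ over the feasible set.

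The main obstacle is the justification that $\Delta \bm d$ is a legitimate descent direction for the \emph{original} (non-convex in $\bm d$) objective, not merely for the linearized surrogate. This is where the hypothesis that $\mathbf{K}_{\bm d}$ is convex or concave and continuously differentiable in $\bm d$ is needed: continuous differentiability ensures the Taylor-expansion gradient-matching at the base point, and one-sided curvature (convex or concave) prevents pathological oscillations of the surrogate relative to the true objective so that a sufficiently small step along $\Delta\bm d$ is guaranteed to decrease $F$ whenever $\bm d^{(t-1)}$ is not already stationary. A secondary technical point is verifying the line search terminates with $\nu>0$; this follows from the usual Armijo/Wolfe-type argument applied to the differentiable composition $\nu \mapsto F(\bm\alpha^{(t)},\rho_k^{(t)},\bm d^{(t-1)}+\nu\Delta\bm d)$, whose right derivative at $\nu=0$ is strictly negative by the gradient-matching argument above.
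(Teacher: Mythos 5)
Your proof is essentially correct and reaches the same conclusion, but the key step is argued quite differently from the paper. The paper rewrites the $\bm d$-block objective as a composition $\Psi(\bm d)=h(g(\bm d))$ with $h$ convex and then does a case analysis on the monotonicity of $h$ and the curvature of $g$: when $h$ is decreasing and $g$ convex (and the symmetric concave case), the linearized surrogate \emph{majorizes} $\Psi$ with equality at $\bm d^{(t-1)}$, so $\Psi(\bm d^{\mbox{cv}})\leq\widetilde\Psi(\bm d^{\mbox{cv}})\leq\widetilde\Psi(\bm d^{(t-1)})=\Psi(\bm d^{(t-1)})$ and no line search is needed; when $h$ is increasing, the paper instead shows $\Psi$ is (locally) convex and invokes the line search, but it merely \emph{asserts} that $\Delta\bm d=\bm d^{\mbox{cv}}-\bm d^{(t-1)}$ is a descent direction there. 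Your gradient-matching argument --- that the surrogate in (\ref{eq:KFSprob3}) agrees with the true objective to first order at the expansion point, so optimality of $\bm d^{\mbox{cv}}$ for the convex surrogate forces $\nabla_{\bm d}F(\bm d^{(t-1)})^\top\Delta\bm d\leq 0$ with equality only at a stationary point --- is a unified treatment that actually supplies the justification the paper skips in the increasing-$h$ case, and you additionally carry out the limit-point/stationarity argument that the paper compresses into one sentence. Two caveats: (i) your stated role for the convex-or-concave hypothesis (``prevents pathological oscillations'') is not really where it enters; in your argument differentiability alone gives the descent property, whereas in the paper's argument the curvature is what delivers the majorization and the local convexity, so you should either drop that remark or make precise what fails without it; (ii) like the paper, you handle only the box constraint $\bm 0\preceq\bm d\preceq\mathbf 1$ and not the nonlinear constraint $\frac{1}{n_k}\sum_{i}\ell(\cdot)\leq\gamma$, whose feasible set also changes under linearization --- the line search must preserve feasibility of the \emph{original} constraint, a point both proofs leave implicit.
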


The hinge loss is not continuously differentiable as required by Proposition \ref{prop:differen}. One may substitute the hinge loss with a differentiable loss function such as the logistic loss, the squared hinge loss, or the Huberized squared hinge loss \citep{rosset2007piecewise}:
\begin{equation}\label{eq:hubhinge}
\ell(u)=\left\{\begin{array}{ll}
  1-u, &\quad u\leq 1-\delta\\
  \frac{(1-u+\delta)^2}{4\delta}, &\quad 1-\delta<u\leq1+\delta\\
  0, &\quad u>1+\delta
\end{array}\right.
\end{equation}

The parameter $\delta$ here is specified by the user. From Figure \ref{fig:lossfun} we see that the Huberized squared hinge loss approximates the hinge loss with small $\delta$. In this paper we use the Huberized hinge loss with $\delta =0.1$ to replace the hinge loss for kernel learning in order to improve the performance.

\begin{figure}[!ht]
\begin{center}
\adjustimage{scale = 0.55}{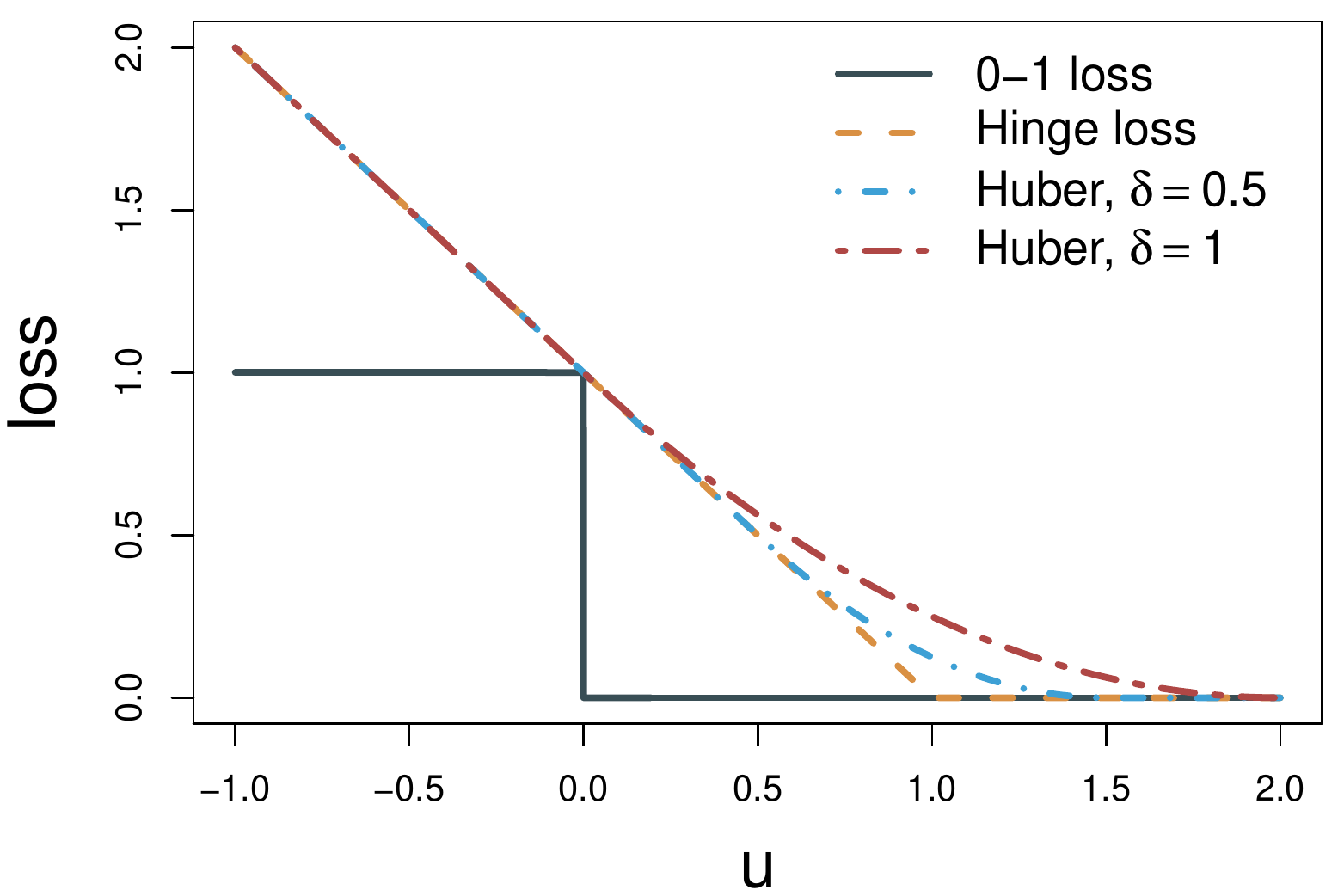}
\vspace{-1em}
\caption{0-1 loss, hinge loss, and two Huberized loss functions.}
\label{fig:lossfun}
\end{center}
\vspace{-1em}
\end{figure}

\subsection{Implementation details}\label{appedx:implem}
To choose the tuning parameters, the candidate hyper-parameters $C_1, C_2$ in GPSKFS is searched from grid $\{1, 2, 3\}$ and $10^{\wedge}\{\pm1, \pm0.75, \pm0.5, \pm0.25, 0\}$, respectively. The hyper-parameter $C$ in GPS is searched from the grid $10^{\wedge}\{\pm2, \pm1.5, \pm1, \pm0.5, 0\}$. For the $\sigma$ parameter in the Gaussian kernel $\exp(-\|\bm x-\bm x^\prime\|^2/\sigma^2)$, we choose it from the $\{25, 37.5, 50, 62.5, 75\}$-th percentiles of all the pairwise weighted Euclidean distances over the training sample $\|\bm d \circ(\bm x-\bm x^\prime)\|_2$, where $\bm d$ is the current estimated weight vector which can itself evolve in the iterations. For the KDE method, the bandwidth is searched from a grid $\{\hat\sigma_{(1)}, \hat\sigma_{(1)} + \frac{\hat\sigma_{(p)}-\hat\sigma_{(1)}}{p-1}, \ldots, \hat\sigma_{(p)}\}\times (\frac{4}{(p+2)n})^{1/(p+4)}$ based on Silverman's  rule-of-thumb bandwidth estimator \citep{silverman2018density}, where $\hat\sigma_{(1)}$ and $\hat\sigma_{(p)}$ are the minimum and maximum standard deviation among all columns of data. We search parameter $\sigma$ in the Gaussian kernel for both OCSVM and BSVM in the same way as in GPS. For BSVM, the parameter $C$ is searched from the same grid as the one for $C_1$ in GPS. The parameter $\nu$ in OCSVM is the upper bound of the overall proportion of points outside of any acceptance region, and hence is set as $\gamma$, which is its class-specific counterpart in our paper. For BCOPS-RF, the maximum depth of the tree is searched from $\{10, 20, \ldots, 90, 100\}$. Minimum samples to split an internal node, minimum samples at a leaf node, and the number of trees are searched from $\{2, 5, 10\}, \{2, 4, 6\}$, and $\{50, 150, 200\}$, respectively. All parameters are determined such that the cardinality of the prediction set is minimized on the calibration data.

To conduct a fair comparison, we use the conformal inference framework for all classifiers. In particular, we make use of the split-conformal method \citep{lei2013distribution, lei2014classification, lei2015conformal}, which does not incur much extra computation burden. For example, for the GPSKFS method, we take $\hat f_k(\bm d\circ\bm x)$ as the conformal score function. Given any class $k$, we randomly split the original training sample from class $k$ with the test data (or its subset) into the training part and the calibration part. We use the first part to train the classifier and the second part to do the calibration and select tuning parameters. The threshold $\hat\tau_k$ is determined based on pre-specified significance level $\gamma$, where $\hat\tau_k$ is taken as $(\gamma\times 100)$-th percentile of the scores among the calibration data. Finally, the prediction set for a given $\bm x$ is $\{k\in[K]:  \hat f_k(\bm d\circ\bm x)\geq\hat\tau_k\}$. Likewise, an outlier is detected when $\hat f_k(\bm d\circ\bm x)<\hat\tau_k$ for all $k\in[K]$. This improvement is applied to all competing methods with the conformal score function chosen appropriately for each method.

\begin{proposition}
Let $\widehat \phi$ be a prediction set induced by $ \hat f_k(\bm d\circ\cdot), k\in[K]$ learned from a model under sample-splitting approach and $(\bm X, Y)$ be an independent new instance, then $$\mathbb{P}[Y\not\in\widehat\phi(\bm X)\mid Y = k]\leq \gamma, \ k\in[K]$$ for any distribution.
\end{proposition}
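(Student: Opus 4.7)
The plan is to apply the classical split-conformal coverage guarantee once per class. Fix $k \in [K]$, let $\mathcal{I}_k^{\mathrm{cal}}$ denote the indices of the class-$k$ observations assigned to the calibration split, and set $n = |\mathcal{I}_k^{\mathrm{cal}}|$, $S_i = \hat f_k(\bm d \circ \bm x_i)$ for $i \in \mathcal{I}_k^{\mathrm{cal}}$, and $S = \hat f_k(\bm d \circ \bm X)$ for the fresh instance. Because the split is random and $\hat f_k$ together with the learned weight $\bm d$ is trained only on the training half (and the disjoint portion of the test sub-sample used to form the alternative class), the scoring function $\hat f_k(\bm d \circ \cdot)$ is independent of $\{S_i\}_{i \in \mathcal{I}_k^{\mathrm{cal}}}$ and of the new pair $(\bm X, Y)$. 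This is the crucial consequence of the split-conformal construction.

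Next I would establish exchangeability. Conditioning on the training split, the function $\hat f_k(\bm d \circ \cdot)$ becomes deterministic. By Assumption~\ref{assum:1}, given $Y=k$ the covariate $\bm X$ has the same class-conditional law $p_k$ as each class-$k$ calibration point, so $S$ and $\{S_i\}_{i \in \mathcal{I}_k^{\mathrm{cal}}}$ are i.i.d.\ draws from the law of $\hat f_k(\bm d \circ \bm X) \mid Y = k$. In particular $(S, S_1, \ldots, S_n)$ is exchangeable.

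By construction, the event $Y \notin \widehat\phi(\bm X)$ given $Y=k$ is equivalent to $\{S < \hat\tau_k\}$, where $\hat\tau_k$ is (up to the precise percentile convention) the $\lfloor \gamma n \rfloor$-th order statistic of the calibration scores. Under continuous score distributions the rank of $S$ among $(S, S_1, \ldots, S_n)$ is uniform on $\{1, \ldots, n+1\}$ by exchangeability, from which a standard calculation gives $\mathbb{P}(S < \hat\tau_k \mid Y = k, \text{training split}) \leq \lfloor \gamma n \rfloor / (n+1) \leq \gamma$. Taking expectation over the randomness of the training split preserves the bound, and the argument repeats verbatim for every $k$.

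The main obstacle, such as it is, is bookkeeping rather than mathematics. One must reconcile the "$(\gamma \times 100)$-th percentile" description of $\hat\tau_k$ used in the implementation with the order-statistic index that produces an exact distribution-free bound, and handle ties (which can occur with positive probability for discrete score distributions) by a conservative tie-breaking rule, e.g.\ assigning the test point the largest rank among tied values. Neither adjustment introduces new ideas, but both must be spelled out to legitimately claim the guarantee holds "for any distribution"; in particular the clause in the paper relying on continuity of $p_k$ or on the small additive slack $1/(n+1)$ should be stated explicitly.
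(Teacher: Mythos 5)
Your argument is correct and is essentially the same as the paper's: the paper does not write out a proof but defers to the standard split-conformal coverage guarantee of \citet{lei2015conformal}, which is precisely the exchangeability-of-ranks argument you reproduce (condition on the training split, note the calibration scores and the fresh class-$k$ score are i.i.d.\ by Assumption~\ref{assum:1}, and bound the rank probability). Your extra care about the percentile convention and ties is a reasonable tightening of the bookkeeping but does not change the route.
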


The proof of this proposition can be found in \citet{lei2015conformal}. It theoretically guarantees accuracy as long as class-conditional distributions are preserved between training and test data, which is fulfilled by Assumption \ref{assum:1}.

\subsection{Simulation data}
\label{app:simdt}

\begin{figure}[!h]
  \centering
  \includegraphics[width=1\linewidth]{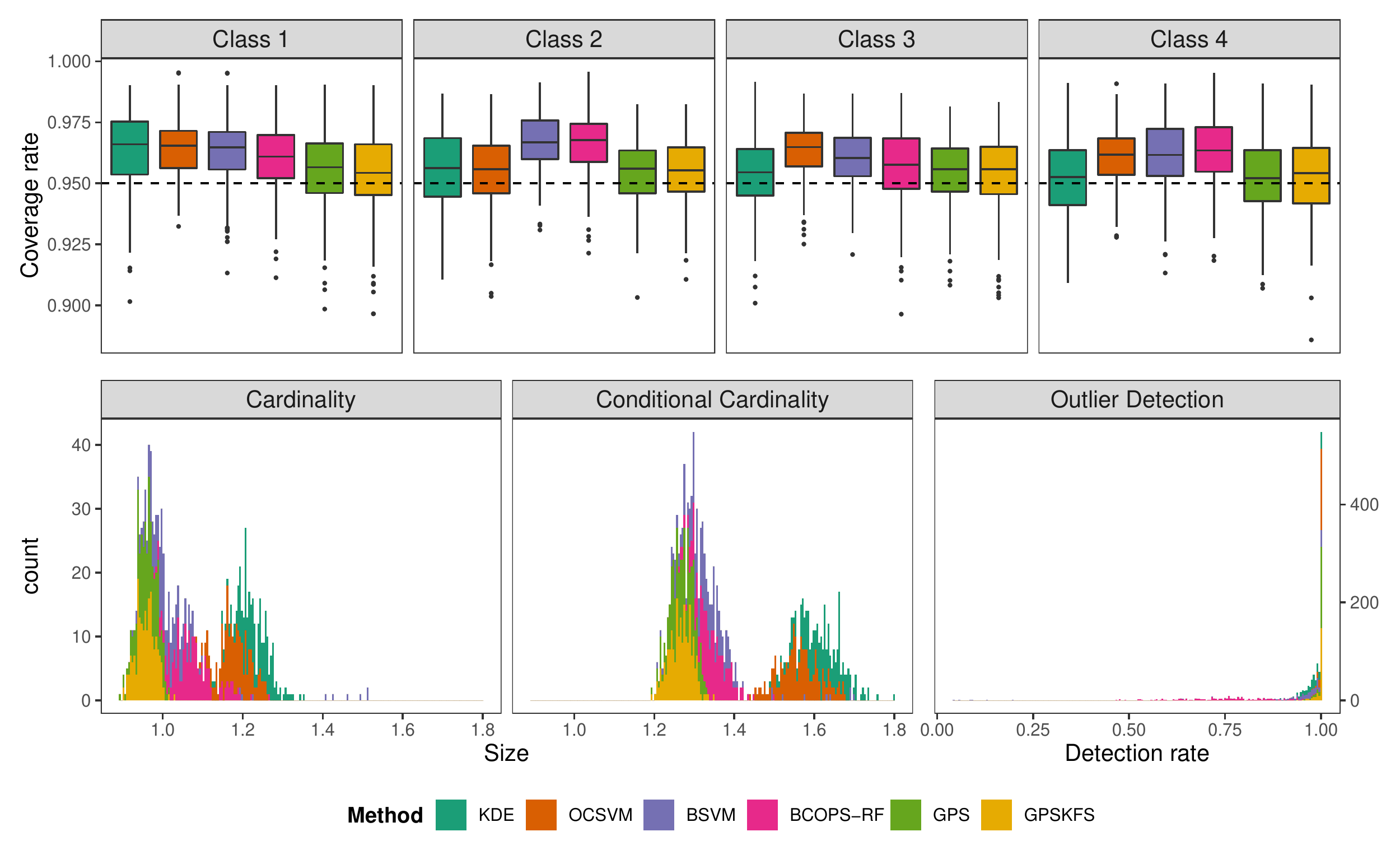}
  \vskip -0.15in
  \caption{Distributions of performance metrics for Example 1: box plots of the coverage rate in the top panel, and histograms of the cardinality, the conditional cardinality, and the outlier detection rate in the bottom panel.}
  \label{fig:cvgcardod1}
\end{figure}

We also show distributions of the coverage rate in our simulation using box plots in the top panel of \cref{fig:cvgcardod1}. The bottom panel of \cref{fig:cvgcardod1} shows the histograms of these metrics in our simulation, which further confirm the high efficiency and high outlier detection of our proposed methods.

\begin{figure*}[!thb]
  \centering
  \includegraphics[width=1\linewidth]{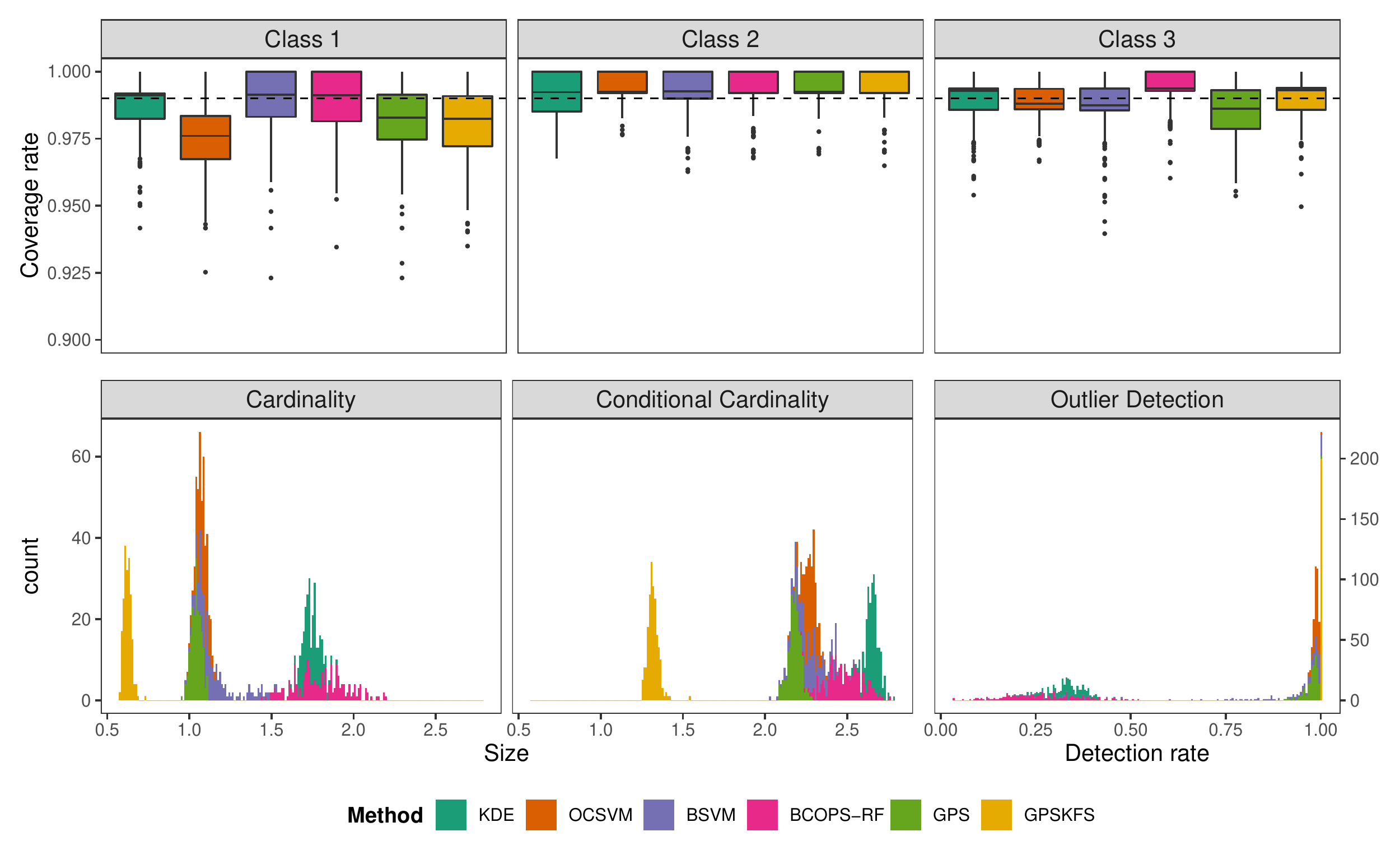}
  \vskip -0.05in
  \caption{Distributions of performance metrics for Example 2: box plots of the coverage rate in the top panel, and histograms of cardinality, conditional cardinality, and outlier detection rate in the bottom panel.}
  \label{fig:cvgcardod2}
\end{figure*}

\cref{fig:cvgcardod2} gives more clear detail on performances, in which box plots of the coverage rates are displayed. From the bottom panel, we can see that the detection rate of plug-in methods (KDE and BCOPS-RF) has a scattered distribution. The prediction set cardinalities for GPSKFS are the smallest and the detection rate concentrates at 100\%. This observation demonstrates the effectiveness of feature selection.

\clearpage

\section{Proofs}
\label{sec:proof}

\textbf{Proof of \cref{prop:differen}}: 
\begin{proof}
The loss function is always bounded below; to prove that it converges to a stationary point, it suffices to prove the Algorithm decreases in each step. Denote the original objective function in Problem (\ref{eq:KFSprob2}) as $\Psi(\bm\alpha, \rho_k, \bm d)$. It is easy to conclude $\Psi(\bm\alpha^{(t)}, \rho_k^{(t)}, \bm d^{(t-1)})\leq \Psi(\bm\alpha^{(t-1)}, \rho_k^{(t-1)}, \bm d^{(t-1)})$ because updating for $(\bm\alpha, \rho_k)$ when fixing $\bm d^{(t-1)}$ is a convex optimization problem. Thus, it suffices to verify $\Psi(\bm\alpha^{(t-1)}, \rho_k^{(t-1)}, \bm d^{(t)})\leq \Psi(\bm\alpha^{(t-1)}, \rho_k^{(t-1)}, \bm d^{(t-1)})$ when fixing $(\bm\alpha^{(t-1)}, \rho_k^{(t-1)})$ and updating for $\bm d$. We only focus on the case where $\frac{\partial\Psi}{\partial\bm d}\neq\bm 0$ at $(\bm\alpha^{(t-1)}, \rho_k^{(t-1)}, \bm d^{(t-1)})$; otherwise we already arrive at a stationary point.

First of all, define $$\mathbf{G}(\bm d)=\left[g_{i,j}(\bm d)\right]_{i,j}
=\begin{bmatrix}
   \mathbf{K}_{\bm d} &&&\\
    & \bm e_1^\top\bm d &  &  \\
    &  & \ddots &  \\
    &  &  & \bm e_p^\top \bm d
 \end{bmatrix} ~~\mbox{and}~~\widetilde{\bm\alpha}=\begin{bmatrix}
   \frac{1}{\sqrt{2}}\bm\alpha\\
   \sqrt{C_2}\bm1_p
 \end{bmatrix},$$
 where $\mathbf{K}_{\bm d}$ is a $(n+m)\times(n+m)$ kernel matrix, $\bm e_l$ is a column vector with $l$-th element 1 but 0 elsewhere, and $\bm 1_p$ is a $p$-dimensional column vector with all 1's. Given the above notations, the scalar $\mathbf{K}_{\bm d}[j, :]\bm\alpha$ for some $j$ can be written as $\sum\limits_i\tilde{\beta}_ig_{i,j}(\bm d)$ for some $\tilde{\beta}_i$'s. Then when fixing $(\bm\alpha^{(t-1)}, \rho_k^{(t-1)})$, we write the original objective function as a function of $\bm d$ only:
$$\Psi(\bm d)=C_1\sum\limits_j\ell(\rho_k^{(t-1)}-\sum\limits_i\tilde{\beta}_ig_{i,j}(\bm d))+\sum\limits_i\sum\limits_j\tilde\alpha_i\tilde\alpha_jg_{i,j}(\bm d).$$ Since $C_1>0$ and $\ell(\cdot)$ is convex, the objective function is still convex with respect to $g_{i,j}(\bm d)$. Without loss of generality and for the simplicity of notation, we can consider minimizing an objective function $\Psi(\bm d)=h\left(g(\bm d)\right)$, where $h(\cdot)$ is a continuously differentiable and convex function, and $g(\bm d)$ is continuously differentiable and convex or concave with respect to $\bm d$ (because of the assumption for kernel functions and the property of $\bm e_l^\top\bm d$). Moreover, denote $\widetilde{\Psi}_{\bm d^{(t-1)}}(\bm d)=h(g(\bm d^{(t-1)})+\nabla g(\bm d^{(t-1)})^\top (\bm d-\bm d^{(t-1)}))$ as the approximated objective function where we linearize the kernel function at $\bm d^{(t-1)}$ to obtain a convex optimization Problem (10). For this sub-optimization problem, we always have $\widetilde{\Psi}_{\bm d^{(t-1)}}(\bm d^{(t)})\leq \widetilde{\Psi}_{\bm d^{(t-1)}}(\bm d^{(t-1)})$.

Now we only need to verify $\Psi(\bm d^{(t)})\leq \Psi(\bm d^{(t-1)})$ for those cases \citep{allen2013automatic}: (1)  $h(\cdot)$ is deceasing or increasing when $g(\cdot)$ is convex, and (2) $h(\cdot)$ is deceasing or increasing when $g(\cdot)$ is concave. 

When $g(\cdot)$ is convex, we have $g(\bm d)\geq g(\bm d^{(t-1)})+\nabla g(\bm d^{(t-1)})^\top(\bm d -\bm d^{(t-1)})$. If $h(\cdot)$ is decreasing, then we have $$\begin{aligned}
h(g(\bm d))&\leq h(g(\bm d^{(t-1)})+\nabla g(\bm d^{(t-1)})^\top(\bm d -\bm d^{(t-1)}))\\
\Rightarrow h(g(\bm d^{(t)}))&\leq h(g(\bm d^{(t-1)})+\nabla g(\bm d^{(t-1)})^\top(\bm d^{(t)} -\bm d^{(t-1)}))\\
\Rightarrow \Psi(\bm d^{(t)})&\leq\widetilde{\Psi}_{\bm d^{(t-1)}}(\bm d^{(t)})\leq \widetilde{\Psi}_{\bm d^{(t-1)}}(\bm d^{(t-1)})=\Psi(\bm d^{(t-1)}),
\end{aligned}$$ which implies the original objective function decreases at this step although the solution $\bm d^{(t)}$ is obtained by solving Problem (\ref{eq:KFSprob3}).

On the other hand, for any $0\leq a\leq1$, the convexity of $g$ yields $$g(a\bm d+(1-a)\bm d^{(t-1)})\leq ag(\bm d)+(1-a)g(\bm d^{(t-1)}).$$ If $h(\cdot)$ is increasing and convex, then we have $$\begin{aligned}
 \Psi(a\bm d+(1-a)\bm d^{(t-1)})= h(g(a\bm d+(1-a)\bm d^{(t-1)}))&\leq h(ag(\bm d)+(1-a)g(\bm d^{(t-1)})) \\
&\leq ah(g(\bm d))+(1-a)h(g(\bm d^{(t-1)}))\\
&=a\Psi(\bm d)+(1-a)\Psi(\bm d^{(t-1)}),
\end{aligned}$$
which implies $\Psi(\bm d)$ is convex at the neighborhood of $\bm d^{(t-1)}$, say $N(\bm d^{(t-1)})$.

Since $\Psi(\bm d)$ is locally convex in $N(\bm d^{(t-1)})$, we can decrease it by taking a proper direction. So we take $\Delta \bm d=\bm d^{\mbox{cv}}-\bm d^{(t-1)}$ as a descent direction with a proper step size $s$ by the line search to decrease $\Psi(\bm d)$, where $s$ is to make sure $\Psi(\bm d)$ is decreased in the feasible region.

For the other two cases where $g(\cdot)$ is concave, similarly, we can verify $\Psi(\bm d)$ also decreases when fixing $(\bm\alpha^{(t-1)},\rho_k^{(t-1)})$. Therefore, the solution obtained from the algorithm converges to a local minimizer.
\end{proof}

\

First of all, we need to introduce below lemma on the boundness of $\rho$ and $g$.

\begin{lemma}
 Let $f(\cdot)=g(\cdot)-\rho\in \mathcal{F}_{s,s^\prime} (s, s^\prime\geq0)$, where $g=\sum_{i=1}^{n_1+m}\alpha_i\Phi(\bm x_i)$ belongs to the Gaussian kernel RKHS. We have, $\rho\leq\sqrt{2}s+2$ and $\|g\|\leq\sqrt{2}s+2$.
\end{lemma}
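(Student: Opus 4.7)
The plan is to combine two pieces of information: (i) the complexity constraint $J(f)=\tfrac12\|g\|^2-\rho\leq s^2$ that defines $\mathcal{F}_{s,s'}$, and (ii) the empirical class-$1$ hinge-loss constraint $\tfrac{1}{n_1}\sum_{i\in\mathcal{G}_1}[1-g(\bm x_i)+\rho]_+\leq\gamma$ (with $\gamma<1$) that every feasible $f$ in (\ref{eq:empoptfk}) satisfies, since this lemma is invoked only on such solutions. The key observation is that $J(f)\leq s^2$ alone cannot bound $\rho$ from above---taking $\rho$ arbitrarily large is consistent with $\|g\|^2\leq 2s^2+2\rho$---so the class-$1$ constraint is what ultimately controls $\rho$.

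First, from $J(f)\leq s^2$ I record $\|g\|^2\leq 2s^2+2\rho$. Next, using $[u]_+\geq u$ and averaging the class-$1$ constraint I obtain $1+\rho-\tfrac{1}{n_1}\sum_{i\in\mathcal{G}_1}g(\bm x_i)\leq\gamma$. For the Gaussian kernel, $K_{\bm d}(\bm x,\bm x)=1$, so $\kappa=1$ and the Cauchy--Schwarz inequality gives $|g(\bm x)|=|\langle g,\Phi(\bm x)\rangle_{\mathcal H}|\leq\|g\|$. Combining, $\rho\leq\gamma-1+\|g\|$.

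Finally, plugging $\rho\leq\|g\|+\gamma-1$ into $\|g\|^2\leq 2s^2+2\rho$ yields $\|g\|^2-2\|g\|\leq 2s^2+2(\gamma-1)$, equivalently $(\|g\|-1)^2\leq 2s^2+2\gamma-1\leq 2s^2+1$ when $\gamma\leq 1$. Taking square roots and using the elementary bound $\sqrt{2s^2+1}\leq\sqrt{2}\,s+1$ produces $\|g\|\leq\sqrt{2}\,s+2$, and substituting back yields $\rho\leq\sqrt{2}\,s+1+\gamma\leq\sqrt{2}\,s+2$. The main obstacle is recognizing that a single inequality among the two unknowns $\rho$ and $\|g\|$ is insufficient: one must couple them through the empirical hinge constraint and then solve the resulting quadratic inequality, rather than trying to bound either quantity in isolation.
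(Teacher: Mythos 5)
Your proof is correct, and it reaches the stated bounds by a genuinely different route from the paper's. The paper closes the gap you correctly diagnose---that $J(f)=\tfrac12\|g\|^2-\rho\le s^2$ alone cannot bound $\rho$---by asserting the geometric fact that, for the Gaussian kernel, the hyperplane $\langle g,\cdot\rangle=\rho$ lies within distance $1$ of the origin, i.e.\ $\rho/\|g\|\le 1$; combining $\rho\le\|g\|$ with $\|g\|^2\le 2s^2+2\rho$ gives $\rho^2-2\rho\le 2s^2$, hence $\rho\le 1+\sqrt{2s^2+1}\le\sqrt2\,s+2$ and then $\|g\|\le\sqrt{2(s^2+\sqrt2\,s+2)}\le\sqrt2\,s+2$. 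You instead obtain the coupling $\rho\le\|g\|+\gamma-1$ from the class-$1$ hinge constraint together with $|g(\bm x)|\le\kappa\|g\|=\|g\|$, and then solve essentially the same quadratic inequality. Your route has the advantage of resting only on constraints that are explicitly imposed: the paper's distance-to-origin claim holds only when the hyperplane meets the unit ball (e.g.\ when the acceptance region is nonempty) and is not a consequence of $f\in\mathcal F_{s,s^\prime}$ alone---indeed $g=0$ with $\rho$ arbitrarily large lies in $\mathcal F_{s,s^\prime}$ and violates the conclusion, so some extra hypothesis of this kind is unavoidable, and your write-up makes that hypothesis explicit. The one caveat is that you invoke the \emph{empirical} constraint from (\ref{eq:empoptfk}), whereas in the proof of Theorem~\ref{thm:type1bnd} the lemma's bounds are used for every $f$ in the population-constrained class $\mathcal F^+_{s,s^\prime}(\gamma)$, not only for empirical solutions; your argument transfers verbatim by replacing the empirical average with $\mathbb E_{\mathcal Q}[g(\bm X)\mid Y=1]\le\|g\|$, so this is a presentational rather than a substantive issue.
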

\begin{proof}
Under the Gaussian kernel, the distance from the hyper-plane to the origin is $\frac{\rho}{\|\bm w\|}\leq 1$. Together with the hypothesis space complexity $\frac{1}{2}\|g\|^2-\rho\leq s^2$, we have $\rho\leq \sqrt{2s^2+1}+1\leq \sqrt{2}s+2$ and hence $\|g\|\leq \sqrt{2(s^2+\sqrt{2}s+2)}\leq \sqrt{2}s+2$.
\end{proof}

\textbf{Proof of \cref{thm:type1bnd}}: \begin{proof}
For simplicity, denote $\mathbb{E}_Q\left[\ell(f(\bm X))\mid Y=1\right]=E_+\left[\ell(f(\bm X))\right]$ and hence $\mathbb{P}_Q[f(\bm X)<0\mid Y=1]\leq E_+[\ell(f(\bm X))]$. Define $\psi(S)=\sup_{f\in \mathcal{F}^+_{s, s^\prime}(\gamma)}E_+[\ell(f(\bm X))]-\frac{1}{n_1}\sum_{\bm x_i\in S}\ell(f(\bm x_i))$ and let $S^\prime$ be another sample from $\mathbb{P}_Q[\cdot\mid Y=1]$ but only different from $S$ on one observation $(\bm x^\prime, 1)$. Thus
\begin{equation*}
\begin{aligned}
    \left\|\psi(S)-\psi(S^\prime)\right\|&=\left\|\left(\sup\limits_{f\in \mathcal{F}^+_{s, s^\prime}(\gamma)}E_+[\ell(f(\bm X))]-\frac{1}{n_1}\sum\limits_{\bm x_i\in S}\ell(f(\bm x_i))\right)\right. \\
    &\quad - \left.\left(\sup\limits_{f\in \mathcal{F}^+_{s, s^\prime}(\gamma)}E_+[\ell(f(\bm X))]-\frac{1}{n_1}\sum\limits_{\bm x^\prime_i\in S^\prime}\ell(f(\bm x^\prime_i))\right)\right\|\\
    &\leq\frac{1}{n_1}\sup\limits_{f\in \mathcal{F}^+_{s, s^\prime}(\gamma)}\left\|\ell(f(\bm x))-\ell(f(\bm x^\prime))\right\|\\
    &\leq\frac{c}{n_1}\sup\limits_{\mathcal{F}^+_{s, s^\prime}(\gamma)}\left\|g(\bm x)-g(\bm x^\prime)\right\|\\
    &\leq\frac{2c}{n_1}\sup\limits_{\mathcal{F}^+_{s, s^\prime}(\gamma)}\left\|\langle g, K_{\bm d}(\bm x, \cdot)\rangle\right\|\\
    &\leq\frac{2(\sqrt{2}s+2)c\kappa}{n_1}.
\end{aligned}
\end{equation*}
Together with McDiarmid inequality, with probability $1-\zeta$, we have
\begin{equation*}
\psi(S)\leq \mathop{E_+}\limits_{S}[\psi(S)]+(\sqrt{2}s+2)c\kappa\sqrt{\frac{2\log{\frac{1}{\zeta}}}{n_1}},
\end{equation*} and hence
\begin{equation*}
  E_+[\ell(f(\bm X))]\leq\frac{1}{n_1}\sum\limits_{i=1}^{n_1}\ell(f(\bm x_i))+\mathop{E_+}\limits_{S}[\psi(S)]+(\sqrt{2}s+2)c\kappa\sqrt{\frac{2\log{\frac{1}{\zeta}}}{n_1}},
\end{equation*}
where \begin{equation*}
\begin{aligned}
    \mathop{E_+}\limits_S[\psi(S)]&=  \mathop{E_+}\limits_S\left[\sup\limits_{f\in \mathcal{F}^+_{s, s^\prime}(\gamma)}E_+[\ell(f(\bm X))]-\frac{1}{n_1}\sum\limits_{\bm x_i\in S}\ell(f(\bm x_i))\right]\\
    &=\mathop{E_+}\limits_{S}\left[\sup\limits_{f\in \mathcal{F}^+_{s, s^\prime}(\gamma)}\mathop{E_+}\limits_{S^\prime}\left[\frac{1}{n_1}\sum\limits_{\bm x_i^\prime\in S^\prime}\ell(f(\bm x_i^\prime))\right]-\frac{1}{n_1}\sum\limits_{\bm x_i\in S}\ell(f(\bm x_i))\right]\\
    &\leq\mathop{E_+}\limits_{S}\mathop{E_+}\limits_{S^\prime}\left[\sup\limits_{f\in \mathcal{F}^+_{s, s^\prime}(\gamma)}\frac{1}{n_1}\sum\limits_{\bm x_i^\prime\in S^\prime}\ell(f(\bm x_i^\prime))-\frac{1}{n_1}\sum\limits_{\bm x_i\in S}\ell(f(\bm x_i))\right]\\
    &=\mathop{E_+}\limits_{S}\mathop{E_+}\limits_{S^\prime}\mathop{\mathbb{E}}\limits_{\bm\sigma}\sup\limits_{f\in \mathcal{F}^+_{s, s^\prime}(\gamma)}\frac{1}{n_1}\sum\limits_{i=1}^{n_1}\sigma_i\left[\ell(f(\bm x_i^\prime))-\ell(f(\bm x_i))\right]\\
    &\leq\mathop{E_+}\limits_{S}\mathop{E_+}\limits_{S^\prime}\mathop{\mathbb{E}}\limits_{\bm\sigma}\sup\limits_{f\in \mathcal{F}^+_{s, s^\prime}(\gamma)}\frac{1}{n_1}\sum\limits_{i=1}^{n_1}\sigma_i\ell(f(\bm x_i^\prime))+\mathop{E_+}\limits_{S}\mathop{E_+}\limits_{S^\prime}\mathop{\mathbb{E}}\limits_{\bm\sigma}\sup\limits_{f\in \mathcal{F}^+_{s, s^\prime}(\gamma)}\frac{1}{n_1}\sum\limits_{i=1}^{n_1}-\sigma_i\ell(f(\bm x_i^\prime))\\
    &=2\mathop{E_+}\limits_{S}\mathop{\mathbb{E}}\limits_{\bm\sigma}\sup\limits_{f\in \mathcal{F}^+_{s, s^\prime}(\gamma)}\frac{1}{n_1}\sum\limits_{\bm x_i\in S}\sigma_i\ell(f(\bm x_i))\\
    &=2\mathfrak{R}_{n_1}(\ell\circ \mathcal{F}^+_{s, s^\prime}(\gamma))
\end{aligned}
\end{equation*}
Applied again with McDiarmid inequality, with probability $1-\zeta$, we have $$\mathfrak{{R}}_{n_1}(\ell\circ \mathcal{F}^+_{s, s^\prime}(\gamma))\leq\mathfrak{\widehat{R}}_{n_1}(\ell\circ \mathcal{F}^+_{s, s^\prime}(\gamma))+(\sqrt{2}s+2)c\kappa\sqrt{\frac{2\log{\frac{1}{\zeta}}}{n_1}}.$$

According to Talagrand's lemma, $$\mathfrak{\widehat{R}}_{n_1}(\ell\circ \mathcal{F}^+_{s, s^\prime}(\gamma))\leq c\cdot\mathfrak{\widehat{R}}_{n_1}(\mathcal{F}^+_{s, s^\prime}(\gamma)).$$
Since $$\begin{aligned}
  \mathfrak{\widehat{R}}_{n_1}( \mathcal{F}^+_{s, s^\prime}(\gamma))&=\mathop{\mathbb{E}}\limits_{\bm\sigma}\sup\limits_{f\in \mathcal{F}^+_{s, s^\prime}(\gamma)}\frac{1}{n_1}\sum\limits_{\bm x_i\in S}\sigma_if(\bm x_i)\\
  &\leq\mathop{\mathbb{E}}\limits_{\bm\sigma}\sup\limits_{\mathcal{F}^+_{s, s^\prime}(\gamma)}\frac{1}{n_1}\sum\limits_{\bm x_i\in S}\sigma_i\langle g, K_{\bm d}(\bm x_i, \cdot)\rangle+\mathop{\mathbb{E}}\limits_{\bm\sigma}\sup\limits_{\mathcal{F}^+_{s, s^\prime}(\gamma)}\frac{1}{n_1}\sum\limits_{\bm x_i\in S}-\sigma_i\rho\\
  &\leq\frac{(\sqrt{2}s+2)}{n_1}\mathop{\mathbb{E}}\limits_{\bm\sigma}\left|\sum\limits_{\bm x_i\in S}\sigma_i\sqrt{K_{\bm d}(\bm x_i, \bm x_i)}\right|\\
  &\leq\frac{(\sqrt{2}s+2)}{n_1}\left[\mathop{\mathbb{E}}\limits_{\bm\sigma}\left(\sum\limits_{\bm x_i\in S}\sigma_i\sqrt{K_{\bm d}(\bm x_i, \bm x_i)}\right)^2\right]^{\frac{1}{2}}\\
  &\leq\frac{(\sqrt{2}s+2)}{n_1}\left(n_1\kappa^2\right)^\frac{1}{2}=\frac{(\sqrt{2}s+2)\kappa}{\sqrt{n_1}},
\end{aligned}$$ combining above results, with probability $1-2\zeta$, we have
\begin{equation*}
  \mathbb{E}_Q\left[\ell(f(\bm X))\mid Y=1\right]\leq\frac{1}{n_1}\sum\limits_{i=1}^{n_1}\ell(f(\bm x_i))+\frac{2(\sqrt{2}s+2)c\kappa}{\sqrt{n_1}}+3(\sqrt{2}s+2)c\kappa\sqrt{\frac{2\log{\frac{1}{\zeta}}}{n_1}}.
\end{equation*}
\end{proof}

Before proving \cref{thm:estmationerror}, we prove the below proposition.

\begin{proposition}
  Let $\nu(\gamma)=\inf_{f\in\mathcal{F}^+_{s, s^\prime}(\gamma)}\mathcal{R}_\ell(f)$, then $\nu$ is a non-increasing convex function on $[0, 1]$.
\end{proposition}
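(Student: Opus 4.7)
My plan is to verify the two properties separately using standard convex-analysis arguments.

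For \textbf{monotonicity}, if $\gamma_1 \leq \gamma_2$, the constraint $\mathbb{E}_{\mathcal{Q}}[\ell(f(\bm X)) \mid Y=1] \leq \gamma$ becomes weaker, so $\mathcal{F}^+_{s,s^\prime}(\gamma_1) \subseteq \mathcal{F}^+_{s,s^\prime}(\gamma_2)$. Minimizing the same objective $\mathcal{R}_\ell(\cdot)$ over a larger feasible region can only decrease the infimum, giving $\nu(\gamma_2) \leq \nu(\gamma_1)$.

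For \textbf{convexity}, fix $\gamma_1, \gamma_2 \in [0,1]$, $\lambda \in [0,1]$, and $\varepsilon > 0$. Choose near-optimal solutions $f_i \in \mathcal{F}^+_{s,s^\prime}(\gamma_i)$ with $\mathcal{R}_\ell(f_i) \leq \nu(\gamma_i) + \varepsilon$, and form the convex combination $f_\lambda = \lambda f_1 + (1-\lambda) f_2$. The argument then hinges on three facts. First, $f_\lambda$ still lies in $\mathcal{F}_{s,s^\prime}$: the penalty $J(f) = \tfrac{1}{2}\|g\|^2 - \rho$ is convex in $(g,\rho)$ (since $\|\cdot\|^2$ is convex and $-\rho$ is linear) and $\|\bm d\|_1$ is convex, so $J(f_\lambda) \leq s^2$ and $\|\bm d_\lambda\|_1 \leq s^\prime$. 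Second, by pointwise Jensen's inequality applied to the convex loss $\ell$, we have $\ell(f_\lambda(\bm x)) \leq \lambda \ell(f_1(\bm x)) + (1-\lambda) \ell(f_2(\bm x))$, so $\mathbb{E}_{\mathcal{Q}}[\ell(f_\lambda(\bm X)) \mid Y=1] \leq \lambda\gamma_1 + (1-\lambda)\gamma_2$, i.e.\ $f_\lambda \in \mathcal{F}^+_{s,s^\prime}(\lambda\gamma_1 + (1-\lambda)\gamma_2)$. Third, the same Jensen argument applied to $\ell(-f)$ yields $\mathcal{R}_\ell(f_\lambda) \leq \lambda \mathcal{R}_\ell(f_1) + (1-\lambda) \mathcal{R}_\ell(f_2)$. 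Chaining these gives $\nu(\lambda\gamma_1 + (1-\lambda)\gamma_2) \leq \lambda \nu(\gamma_1) + (1-\lambda)\nu(\gamma_2) + \varepsilon$; letting $\varepsilon \downarrow 0$ completes the argument.

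The \textbf{main obstacle} is the first verification: when $f_1$ and $f_2$ use different feature weights $\bm d_1 \neq \bm d_2$, the convex combination $f_\lambda$ does not obviously sit in a single RKHS $\mathcal{H}_{K_{\bm d}}$, so membership in $\mathcal{F}_{s,s^\prime}$ need not be automatic. The cleanest workaround is to reinterpret $f_\lambda$ as a randomized classifier that equals $f_1$ with probability $\lambda$ and $f_2$ with probability $1-\lambda$: linearity of expectation makes both $\mathcal{R}_\ell$ and the class-1 $\ell$-risk exactly $\lambda$-weighted combinations of the individual values, so the same bounds on $\nu$ go through. Alternatively, one can restrict the infimum to functions sharing a common $\bm d$, where the RKHS structure is preserved and the argument applies verbatim.
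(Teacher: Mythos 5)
Your proof is correct and takes essentially the same route as the paper: form the convex combination of (near-)optimal functions at $\gamma_1$ and $\gamma_2$ and apply Jensen's inequality to the convex loss twice, once to verify feasibility at $\theta\gamma_1+(1-\theta)\gamma_2$ and once to bound $\mathcal{R}_\ell$ of the combination; the only cosmetic difference is that you use $\varepsilon$-optimizers where the paper invokes compactness of $\mathcal{F}^+_{s,s^\prime}(\gamma)$ to extract exact minimizers. The obstacle you flag --- that a convex combination of functions built from different weight vectors $\bm d_1\neq\bm d_2$ need not lie in any single $\mathcal{H}_{K_{\bm d}}$, so membership of $f_\lambda$ in $\mathcal{F}_{s,s^\prime}$ is not automatic --- is genuine and is left unaddressed in the paper's proof as well, so your suggested fixes (restricting to a common $\bm d$, or convexifying the hypothesis class) are a real improvement in rigor rather than a detour.
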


\begin{proof}
$\nu$ is non-increasing because of the definition of infimum. We now focus on the convexity. $\mathcal{F}^+_{s, s^\prime}(\gamma)$ is compact due to continuouity and boundedness of $\ell$ and $f$, therefore, there exits a $f^\gamma\in\mathcal{F}^+_{s, s^\prime}(\gamma)$ such that $\nu(\gamma)=\mathcal{R}_\ell(f^\gamma)$.

Let $\nu(\gamma_1)=\mathcal{R}_\ell(f^{\gamma_1}), \nu(\gamma_2)=\mathcal{R}_\ell(f^{\gamma_2})$ and define
$\gamma_\theta=\theta\gamma_1+(1-\theta)\gamma_2, f_\theta=\theta f^{\gamma_1}+(1-\theta)f^{\gamma_2}$ for any $\theta\in(0, 1)$, since $\ell$ is convex, then we have
$$E_+[\ell\circ f_\theta]\leq \theta E_+[\ell\circ f^{\gamma_1}]+(1-\theta) E_+[\ell\circ f^{\gamma_2}]\leq\theta {\gamma_1}+(1-\theta) {\gamma_2}=\gamma_{\theta}$$ and hence $$\nu(\theta\gamma_1+(1-\theta)\gamma_2)=\nu(\gamma_\theta)\leq\mathcal{R}_\ell(f_\theta)\leq\theta\mathcal{R}_\ell(f^{\gamma_1})+(1-\theta)\mathcal{R}_\ell(f^{\gamma_2})=\theta\nu(\gamma_1)+(1-\theta)\nu(\gamma_2).$$ Therefore, $\nu$ is convex.
\end{proof} 

\textbf{Proof of \cref{thm:estmationerror}}:
\begin{proof}
For any $0\leq\gamma-\varepsilon_0<\gamma-\varepsilon<1$, based on the properties of $\nu(\cdot)$ we have

$$\begin{aligned}
  \frac{\nu(\gamma-\varepsilon_0)-\nu(\gamma-\varepsilon)}{\varepsilon-\varepsilon_0}&\leq\frac{\nu(\gamma-\varepsilon)-\nu(\gamma)}{-\varepsilon}\\
  \nu(\gamma-\varepsilon)-\nu(\gamma)&\leq\frac{\varepsilon}{\varepsilon_0-\varepsilon}\left(\nu(\gamma-\varepsilon_0)-\nu(\gamma-\varepsilon)\right).\\
\end{aligned}$$

Now take $\varepsilon_0=\gamma$, we obtain \begin{equation}\label{eq:propInduced}
    \nu(\gamma-\varepsilon)-\nu(\gamma)\leq\frac{\varepsilon}{\gamma-\varepsilon}\left(2+\frac{\delta}{2}\right)
\end{equation} because we have $f\equiv1+\frac{\delta}{2}$ satisfy the $\ell$-type I error and then $\mathcal{R}_\ell\equiv2+\frac{\delta}{2}$.

Let's first define $A=\left\{\mathbb{E}_Q\left[\ell(Yf(\bm X))\mid Y=1\right]-\frac{1}{n_1}\sum_{i=1}^{n_1}\ell(f(\bm x_i))<\varepsilon\right\}$, where $$\varepsilon=\frac{(\sqrt{2}s+2)c\kappa\left(2+3\sqrt{2\log\frac{2}{\zeta}}\right)}{\sqrt{n_1}}.$$ Based on the proof for \cref{thm:type1bnd}, we have $\mathbb{P}[A]\geq 1-\zeta$.

\begin{equation}\label{eq:profexcess}
\begin{aligned}
\mathcal{R}_\ell(\hat f)-\inf\limits_{f\in\mathcal{F}^+_{s, s^\prime}(\gamma)} R_\ell(f)&=\mathcal{R}_\ell(\hat f)-\inf\limits_{f\in\mathcal{\widehat{F}}^+_{s,s^\prime}(\gamma-\varepsilon)}\mathcal{R}_\ell(f)\\
&\quad +\inf\limits_{f\in\mathcal{\widehat{F}}^+_{s,s^\prime}(\gamma-\varepsilon)}\mathcal{R}_\ell(f)-\inf\limits_{f\in\mathcal{F}^+_{s, s^\prime}(\gamma-2\varepsilon)}\mathcal{R}_\ell(f)\\
&\quad +\inf\limits_{f\in\mathcal{F}^+_{s, s^\prime}(\gamma-2\varepsilon)}\mathcal{R}_\ell(f)-\inf\limits_{f\in\mathcal{F}^+_{s,s^\prime}(\gamma)} R_\ell(f)\\
&\leq2\sup\limits_{f\in\mathcal{\widehat{F}}^+_{s,s^\prime}(\gamma)}\biggl|\mathcal{R}_\ell(f)-\frac{1}{m}\sum\limits_{j=1}^m\ell(-f(\bm x_j))\biggl|\\
&\quad +0\\
&\leq\frac{2\varepsilon}{\gamma-2\varepsilon}\left(2+\frac{\delta}{2}\right) ~~~\mbox{by Inequality}~(\ref{eq:propInduced})
\end{aligned}
\end{equation}

Recall empirical minimizer $\hat f\in\mathcal{\widehat{F}}^+_{s,s^\prime}(\gamma-\varepsilon)\subset \mathcal{F}^+_{s,s^\prime}(\gamma)$.  Define $\bar f:=\arginf_{f\in\mathcal{\widehat F}^+_{s, s^\prime}(\gamma-\varepsilon)} R_\ell(f)$, then the first part in first line on the right of the Inequality (\ref{eq:profexcess}) bounded by twice of supremum is due to
$$\begin{aligned}
\mathcal{R}_\ell(\hat f)-\mathcal{R}_\ell(\bar f)&=\mathcal{R}_\ell(\hat f)-\frac{1}{m}\sum\limits_{j=1}^m\ell(-\hat f(\bm x_j))+\frac{1}{m}\sum\limits_{j=1}^m\ell(-\hat f(\bm x_j))-\frac{1}{m}\sum\limits_{j=1}^m\ell(-\bar f(\bm x_j))\\
&\quad +\frac{1}{m}\sum\limits_{j=1}^m\ell(-\bar f(\bm x_j))-\mathcal{R}_\ell(\bar f)\\
&\leq \mathcal{R}_\ell(\hat f)-\frac{1}{m}\sum\limits_{j=1}^m\ell(-\hat f(\bm x_j))+0- \left[\mathcal{R}_\ell(\bar f)-\frac{1}{m}\sum\limits_{j=1}^m\ell(-\bar f(\bm x_j))\right]\\
&\leq 2\sup\limits_{f\in\mathcal{\widehat{F}}^+_{s,s^\prime}(\gamma-\varepsilon)}\biggl|\mathcal{R}_\ell(f)-\frac{1}{m}\sum\limits_{j=1}^m\ell(-f(\bm x_j))\biggl|\\
&\leq 2\sup\limits_{f\in\mathcal{\widehat{F}}^+_{s,s^\prime}(\gamma)}\biggl|\mathcal{R}_\ell(f)-\frac{1}{m}\sum\limits_{j=1}^m\ell(-f(\bm x_j))\biggl|.
\end{aligned}$$
The second line on the right of the Inequality (\ref{eq:profexcess}) can be bounded by 0 since $\mathcal{F}^+_{s, s^\prime}(\gamma-2\varepsilon)$ is a subspace of $\mathcal{F}^+_{s, s^\prime}(\gamma-\varepsilon)$ and close to $\mathcal{\widehat{F}}^+_{s,s^\prime}(\gamma-2\varepsilon)$ when sample size $m$ is large enough.

Therefore, with probability $1-2\zeta$, $$\mathcal{R}_\ell(\hat f)-\inf\limits_{f\in\mathcal{{F}}^+_{s,s^\prime}(\gamma)} R_\ell(f)\leq\frac{2(\sqrt{2}s+2)c\kappa(2+3\sqrt{2\log\frac{2}{\zeta}})}{\sqrt{m}}+\frac{(4+\delta)\varepsilon}{\gamma-2\varepsilon}.$$
\end{proof}


\textbf{Proof of \cref{thm:featsel}}:
We mainly follow the proof of the Theorem in \citet{chen2018double}. Suppose $\|\bm x\|_\infty = \kappa_0<\infty$, loss function $\ell$ is differentiable with Lipschitz constant $c$. 

\begin{proof}
Let $\hat f$ be the empirical risk minimizer. Based on Corollary 4.36 (RKHSs of differentiable kernels) in \citet{steinwart2008support} we have $\frac{\partial f(\bm x)}{\partial d_t}\leq\frac{\sqrt{2}\kappa_0}{\sigma}$ and hence $\frac{\partial \ell(f)}{\partial d_t}$ is still Lipschitz with Lipschitz constant $c^\prime=\frac{\sqrt{2}\kappa_0 c}{\sigma}$. Then similarly to proofs of previous Theorem 1 and 2, with probability at least $1-3\zeta$, we have $$\left|\frac{\partial}{\partial d_t}\bigg\{\mathbb{E}_{\mathcal{Q}}[\ell(-\hat f(\bm X))]-\frac{1}{m}\sum\limits_{j=1}^m\ell(-\hat f(\bm x_j))\bigg\}\right|\leq \frac{ (\sqrt{2}s+2)c^\prime\kappa(2+3\sqrt{2\log\frac{2}{\zeta}})}{\sqrt{m}},$$
and $$\left|\frac{\partial}{\partial d_t}\bigg\{\mathbb{E}_{\mathcal{Q}}[\ell(-f^*(\bm X))]- \mathbb{E}_{\mathcal{Q}}[\ell(-\hat f(\bm X))]\bigg\}\right|\leq \frac{2 (\sqrt{2}s+2)c^\prime\kappa(2+3\sqrt{2\log\frac{2}{\zeta}})}{\sqrt{m}}+\frac{(4+\delta)\varepsilon}{\gamma-2\varepsilon}+D_s,$$
where the approximation error $D_s:=\inf_{f\in\mathcal{F}}\mathbb{E}_{\mathcal{Q}}[\ell(-f(\bm X))]-\mathbb{E}_{\mathcal{Q}}[\ell(-f^*(\bm X))]\rightarrow0$ since $f^*$ has a sparse representation. Therefore, by the triangle inequality, with probability at least $1-3\zeta$ we have $$\begin{aligned}
&~ \left|\frac{\partial}{\partial d_t}\bigg\{ \mathbb{E}_{\mathcal{Q}}[\ell(-f^*(\bm X))] - \frac{1}{m}\sum\limits_{j=1}^m\ell(-\hat f(\bm x_j))\bigg\} \right|_{\scriptscriptstyle d_t=0, d_{t^\prime}=d^\ast_{t^\prime}, t\neq t^\prime}
\\
\leq&~ \frac{3(\sqrt{2}s+2)c^\prime\kappa(2+3\sqrt{2\log\frac{2}{\zeta}})}{\sqrt{m}}+\frac{(4+\delta)\varepsilon}{\gamma-2\varepsilon}+D_s,
\end{aligned}$$
Consequently, 
for those important features $\bm x_{\cdot, t}$ we have \begin{equation*}
\begin{aligned}
    &~ \left.\frac{\partial}{\partial d_t}\frac{1}{m}\sum\limits_{j=1}^m\ell(-\hat f(\bm x_j))\bigg\} \right|_{\scriptscriptstyle d_t=0, d_{t^\prime}=d^\ast_{t^\prime}, t\neq t^\prime}\\
    < &~ \left.\frac{\partial \mathbb{E}_{\mathcal{Q}}[\ell(-f^*(\bm X))]}{\partial d_t} \right|_{\scriptscriptstyle d_t=0, d_{t^\prime}=d^\ast_{t^\prime}, t\neq t^\prime}+O\left(\max(\frac{1}{\sqrt{n_1}}+\frac{1}{\sqrt{m}}, D_s)\right),
\end{aligned}
\end{equation*}
and for those noise features $\bm x_{\cdot, t}$ we have 
\begin{equation*}
    \begin{aligned}
        &~\left.\frac{\partial}{\partial d_t}\frac{1}{m}\sum\limits_{j=1}^m\ell(-\hat f(\bm x_j))\bigg\} \right|_{\scriptscriptstyle d_t=0, d_{t^\prime}=d^\ast_{t^\prime}, t\neq t^\prime}\\
        \geq &~ \left.\frac{\partial \mathbb{E}_{\mathcal{Q}}[\ell(-f^*(\bm X))]}{\partial d_t} \right|_{\scriptscriptstyle d_t=0, d_{t^\prime}=d^\ast_{t^\prime}, t\neq t^\prime} -O\left(\max(\frac{1}{\sqrt{n_1}}+\frac{1}{\sqrt{m}}, D_s)\right).
    \end{aligned}
\end{equation*}

Finally, together with the assumption in \cref{thm:featsel} for important and unimportant features, we have 
$$\mathbb{P}\left[\mbox{sign}(\hat d_t)=\mbox{sign}(d^\ast_t)\right]\longrightarrow1, \ t\in[p].$$
\end{proof}

\end{document}